\def\tmp#1#2#3{%
  \definecolor{Hy#1color}{#2}{#3}%
  \hypersetup{#1color=Hy#1color}}
\def\tmp#1#2{%
  \colorlet{Hy#1bordercolor}{Hy#1color#2}%
  \hypersetup{#1bordercolor=Hy#1bordercolor}}
\newacronym[\glslongpluralkey={Stochastic Processes}]{SPs}{sp}{Stochastic Process}
\newacronym[\glslongpluralkey={Gaussian Processes}]{GPs}{gp}{Gaussian Process}
\newacronym[\glslongpluralkey={Copula-Based Processes}]{CBPs}{cbp}{Copula-Based Process}
\newacronym[\glslongpluralkey={Neural Processes}]{NPs}{np}{Neural Process}
\newacronym[\glslongpluralkey={Attentive Neural Processes}]{ANPs}{anp}{Attentive Neural Process}
\newacronym[\glslongpluralkey={Convolutional Neural Processes}]{ConvNPs}{convnp}{Convolutional Neural Process}
\newacronym[\glslongpluralkey={Gaussian Neural Processes}]{GNPs}{gnp}{Gaussian Neural Process}
\newacronym[\glslongpluralkey={Neural Diffusion Processes}]{NDPs}{ndp}{Neural Diffusion Process}
\newacronym[\glslongpluralkey={Markov Neural Processes}]{MNPs}{mnp}{Markov Neural Process}
\newacronym[\glslongpluralkey={Multi-Layer Perceptrons}]{MLPs}{mlp}{Multi-Layer Perceptron}
\newacronym[\glslongpluralkey={Conditional Neural Processes}]{CNPs}{cnp}{Conditional Neural Process}
\newacronym[\glslongpluralkey={Convolutional Conditional Neural Processes}]{ConvCNPs}{convcnp}{Convolutional Conditional Neural Process}
\newacronym{SetEnc}{setenc}{Set Encoder}
\newacronym{fMTO}{fMTO}{Markov transition operators in function space}
\newacronym{mMTO}{mMTO}{marginal Markov transition operator}
\theoremstyle{plain}
\theoremstyle{definition}
\theoremstyle{remark}
\newcommand{\appendixtitle}[1]{
    \begin{center}
        \Large\textbf{Appendix of #1}
    \end{center}
}
\title{Deep Stochastic Processes via  Functional Markov Transition Operators}
\author{%
  \bf{Jin Xu}$^{1}$\thanks{Corresponding author: \texttt{<jin.xu@stats.ox.ac.uk>}} \hspace{0.4cm} \bf{Emilien Dupont}$^{1}$\thanks{ED is now at Google DeepMind; this work was done while ED was at Oxford.}
  \hspace{0.4cm} \bf{Kaspar Märtens}$^{2}$
  \hspace{0.4cm} \bf{Tom Rainforth}$^{1}$ \hspace{0.4cm}
  \bf{Yee Whye Teh}$^{1}$\thanks{YWT is at both Google DeepMind and Oxford; this work was done at Oxford.} \\ \\
  $^1$ Department of Statistics, University of Oxford, UK.\\
  $^2$ Big Data Institute, University of Oxford, UK. 
}
\begin{document}

\maketitle

\begin{abstract}
    We introduce \Glspl{MNPs}, a new class of \Glspl{SPs} which are constructed by stacking sequences of neural parameterised Markov transition operators in function space. We prove that these Markov transition operators can preserve the exchangeability and consistency of \glspl{SPs}. Therefore, the proposed iterative construction adds substantial flexibility and expressivity to the original framework of \glspl{NPs} without compromising consistency or adding restrictions. Our experiments demonstrate clear advantages of \glspl{MNPs} over baseline models on a variety of tasks.
\glsresetall
\end{abstract}

\def\xx{\textbf{x}}
\def\yy{\textbf{y}}
\def\ww{\textbf{w}}
\def\kk{\textbf{k}}
\def\rr{\textbf{r}}
\def\ss{\textbf{s}}
\def\ff{\textbf{f}}
\def\d{\text{d}}

\section{Introduction}

\glspl{SPs} are widely used in many scientific disciplines, including biology \citep{Bressloff2021StochasticPI}, chemistry \citep{Kampen1981StochasticPI}, neuroscience \citep{Laing2009StochasticMI}, physics \citep{Paul2000StochasticPF} and control theory \citep{Bertsekas2007StochasticOC}.
They are formed by a (typically infinite) collection of random variables and can be used to model data by considering the conditional distribution of target variables given observed context variables.
In machine learning, \glspl{SPs} in the
form of Bayesian nonparametric models---such as \Glspl{GPs} \citep{Rasmussen2009GaussianPF} and Dirichlet processes \citep{teh2004sharing}---are used in tasks such as regression, classification, and clustering. 
\glspl{SPs} parameterised by neural networks have also been used for meta-learning  \citep{garnelo2018conditional,xu2020metafun} and generative modelling \citep{Mathieu2021OnCR,Dupont2022GenerativeMA}. 

With the increasing amount of data available, and the complex patterns arising in many applications, more flexible and scalable \gls{SPs} models with greater learning capacity are required.
The \gls{NPs} family \citep{garnelo2018conditional,Garnelo2018NeuralP,kim2018attentive,gordon2019convolutional,foong2020meta} meets this demand by parameterising \glspl{SPs} with neural networks, and enjoys greater flexibility and computational efficiency compared to traditional nonparametric models.

Unfortunately, the original version of \glspl{NPs} \citep{Garnelo2018NeuralP} lacks expressivity and often underfits the data in practice \citep{kim2018attentive}.
Various extensions have therefore been proposed---such as \glspl{ANPs} \citep{kim2018attentive}, \glspl{ConvNPs} \citep{gordon2019convolutional,foong2020meta} and \glspl{GNPs} \citep{bruinsmagaussian}---to improve expressivity.
However, these models---which we refer to as \emph{predictive} \glspl{SPs}---are based around directly constructing mappings from contexts to predictive distributions, and therefore forgo the fully generative nature of the original \gls{NPs} formulation.
This can be problematic as it means that they are no longer \emph{consistent} under conditioning: their predictive distributions no longer correspond to the conditional distribution of an underlying \gls{SPs} prior.
In turn, this can cause a variety of issues, such as conflicts or inconsistencies between predictions and miscalibrated uncertainties.

To address these issues, we propose an alternative mechanism to extend the \gls{NPs} family and provide increased expressivity while maintaining their original fully generative nature and consistency.
We begin by generalising the marginal density functions of \glspl{NPs}. The generalised form can be viewed as Markov transition operators with functions as states. This lays the foundation for stacking these operators to construct more powerful generative \gls{SPs} models that we call \glspl{MNPs}. 
\glspl{MNPs} can be seen as Markov chains in \emph{function} space, parameterised by neural networks: they make iterative transformations from simple initial \glspl{SPs} into more flexible, yet properly defined, \glspl{SPs} (as illustrated in 
 \Cref{fig:intro_fig}), without compromising consistency or introducing additional assumptions. 

 \begin{figure}[t]
\centering
\includegraphics[width=0.99\textwidth]{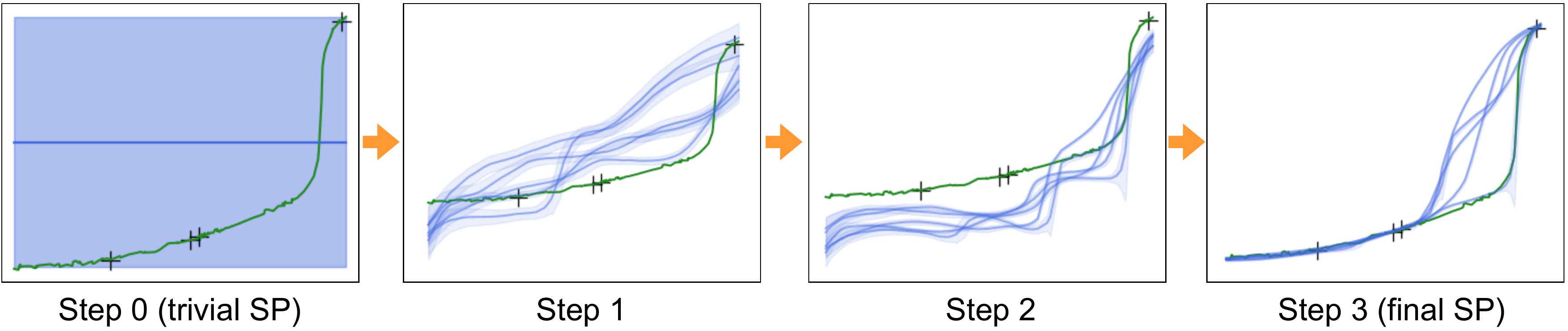}
\caption{\textbf{\glspl{MNPs} construct expressive \glspl{SPs} via iterative transformations.} Here we use \glspl{MNPs} to model distributions over monotonic functions. We start with trivial initial \glspl{SPs} and gradually transform them into more 
complex
\glspl{SPs} conditioned on observed context points marked as black crosses. The blue lines represent sampled mean functions and the shaded region indicates one standard deviation.
\label{fig:intro_fig}
}
\end{figure}

To empirically demonstrate the value of our proposed approach, we first benchmark \glspl{MNPs} against baselines on 1D function regression tasks, and conduct ablation studies in this controlled setting. We then show that they can be used as a high-performing surrogate for contextual bandit problems. Finally, we apply them to geological data, for which they demonstrate encouraging performance.

\section{Background}
\label{sec:background}

A \gls{SPs} is a (typically infinite) collection of random variables defined on a common probability space. We can consider a \gls{SPs} as a random function $F:\mathcal{X}\rightarrow \mathcal{Y}$ where inputs can be regarded as indexing the output random variables. With a relaxed use of notation, we employ $p(f)$ in denoting a \gls{SPs}, where $f$ maps 
inputs $x\in \mathcal{X}$ to outputs $y\in \mathcal{Y}$.
Kolmogorov’s Consistency Theorem shows that a \gls{SPs} can be \emph{indirectly} defined via a collection of marginal distributions,
$\{p_{x_{1:n}}(y_{1:n})\}_{x_{1:n}\in \mathcal{X}^n}$ (we drop $\mathcal{X}^n$ from here on for conciseness)
if they, for any permutation $\pi$ and all possible sets of inputs $x_{1:n}\in \mathcal{X}^n$, satisfy the \emph{exchangeability} condition:
\begin{align} \label{eq:pdf_exchangeability}
    p_{x_{1:n}}(y_{1:n}) =&~p_{\pi(x_{1:n})}(\pi(y_{1:n}))
    :=~p_{x_{\pi(1)},\dots,x_{\pi(n)}}(y_{\pi(1)},\dots,y_{\pi(n)}),
\end{align}
and the \emph{(marginal) consistency} condition:
\begin{align} \label{eq:pdf_consistency}
    p_{x_{1:m}}(y_{1:m}) &= \int p_{x_{1:n}}(y_{1:n}) \,\d y_{m+1:n} ~\quad \forall 1\leq m < n.
\end{align}
If none of the random variables in a \gls{SPs} are observed, we call it a \emph{prior} \gls{SPs}. For any two distinct subsets of datapoints $\mathcal{C}=\{(x_i,y_i)\}_{i=1}^{m}$ (the context) and $\mathcal{T}=\{(x_i,y_i)\}_{i=m+1}^{n}$ (the target), one can use this prior \gls{SPs} to compute the conditional density $p_{x_{m+1:n} | x_{1:m}}(y_{m+1:n} | y_{1:m})$ via Bayesian inference. If inference is exact, it can be proved that
the collection of conditional distributions $\{p_{x_{m+1:n} | x_{1:m}}(y_{m+1:n} | y_{1:m})\}_{x_{m+1:n}}$ 
also satisfy exchangeability and consistency, and hence define a valid \emph{posterior} \gls{SPs}, $p(f | \mathcal{C})$. 
We discuss the impact of approximate inference in \Cref{subsec: parameterization inference and training}. 

Based on a conditional version of de Finetti’s Theorem, a \gls{NPs}~\citep{Garnelo2018NeuralP} defines a \gls{SPs} by providing a collection of exchangeable and consistent marginal densities parameterized by neural networks. Specifically, they set up their marginal densities to have the form:
\begin{equation} \label{eq:np_density_function}
    p_{x_{1:n}}(y_{1:n};\theta) = \int p_{\theta}(z) \prod\nolimits_{i=1}^{n} \mathcal{N}(y_i | \mu_{\theta}(x_i,z), \sigma_{\theta}^2(x_i,z)) \d z,
\end{equation}
where $z$ is a latent variable which captures dependencies across different input locations, $y_i:=f(x_i)$,
$\mu_{\theta}$ and $\sigma_{\theta}$ are deep neural networks, and $p_{\theta}(z)$ is a, typically Gaussian, prior distribution on the latents. Note that this form is more general than the one in \citep{Garnelo2018NeuralP} where both $p_{\theta}$ and $\sigma_{\theta}$ are not learnt.

\section{Generative versus predictive stochastic process models} 
\label{subsec:generative_vs_predictive}

Before introducing our \gls{MNPs} approach, we first delve into the distinctions between conventional, consistent, \gls{SPs} models like \glspl{NPs}, and the predictive \gls{SPs} models corresponding to popular \gls{NPs} extensions, such as  \glspl{ANPs}, \glspl{ConvNPs}, and \glspl{GNPs}, highlighting some of the potential drawbacks with the latter. 

We introduce the term \emph{generative} \gls{SPs} model to refer to the standard case where one first specifies a prior \gls{SPs}, $p(f)$, and then relies on Bayesian inference to compute posterior a \gls{SPs}, $p(f|\mathcal{C})$. The context and the target are only distinguished for inference, and they are treated equally in model specification. Traditional nonparametric models such as \glspl{GPs}, Student-t processes and the original \glspl{NPs} belong to this category. Since posterior \glspl{SPs} are inferred under the same prior \gls{SPs}, under the condition of exact inference, for two different contexts $\mathcal{C}=\{(x_i,y_i)\}_{i\in \mathcal{C}}$ and $\mathcal{C}'=\{(x_i,y_i)\}_{i\in \mathcal{C}'}$, we have
\begin{align}
    \label{eq:cond-const}
    \int p(f|\mathcal{C}) p_{x_{\mathcal{C}}}(y_{\mathcal{C}}) \d y_{\mathcal{C}} = \int p(f|\mathcal{C}') p_{x_{\mathcal{C}'}}(y_{\mathcal{C}'}) \d y_{\mathcal{C}'},
\end{align}
where $x_{\mathcal{C}}:=\{x_i\}_{i\in \mathcal{C}}$ and $x_{\mathcal{C}'}:=\{x_i\}_{i\in \mathcal{C}'}$ are taken as fixed, and $p_{x_{\mathcal{C}}}(y_{\mathcal{C}})$ and $p_{x_{\mathcal{C}'}}(y_{\mathcal{C}'})$ are defined through the prior \gls{SPs} $p(f)$.
We call this property \emph{conditional consistency} to set apart from marginal consistency defined in \Cref{eq:pdf_consistency}.
Note that conditional consistency is a prerequisite for marginal consistency to hold for the prior $p(f)$: if the two integrals in~\Cref{eq:cond-const} are not equal, at least one must be different to the prior distribution $p(f)$.

By contrast, 
predictive \gls{SPs} models
directly construct mappings from a context $\mathcal{C}$ to a predictive \gls{SPs} $p(f;\mathcal{C})$ and make a clear distinction between the context and the target in model specification. 
Consequently, for these models $p(f;\mathcal{C})$ no longer corresponds to a posterior derived from a certain prior $p(f)$, so they no longer need to satisfy conditional consistency. 
As such, they \emph{no longer form a valid conditioning of a \gls{SPs}}: though they are typically constructed to ensure $p(f;\mathcal{C})$ is itself a valid \gls{SPs} for new evaluations of $f$ with $\mathcal{C}$ held fixed, this \gls{SPs} is no longer itself derived as the conditional of a prior \gls{SPs}.
As such, predictive \gls{SPs} models are not consistent under updating and no longer update their uncertainties in a Bayesian manner as new data is observed.
In short, they are \emph{no longer treating the data itself as being drawn from an \gls{SPs}}.

Though predictive \gls{SPs} models have proven to be effective tools for meta-learning tasks, such as 1-D regression, image regression, and few-shot image classification \citep{kim2018attentive,xu2020metafun,foong2020meta,bruinsmagaussian}, this lack of consistent updating can be problematic in scenarios where the context is not fixed, such as when performing sequential updating.
In particular, their uncertainties will be mismatched with how the model is updated in practice as new data is observed.
This has the potential to be problematic for a wide variety of possible applications involving sequential decision-making, such as Bayesian experimental design \citep{chaloner1995bayesian,rainforth2023modern}, active learning~\citep{settles2009active,houlsby2011bayesian,bickfordsmith2023prediction}, Bayesian optimization \citep{garnett2023bayesian,frazier2018tutorial}, and contextual bandit problems \citep{langford2007epoch,riquelme2018deep}. 
There is also the potential for such models to fall foul of so-called Dutch Book scenarios~\citep{Hjek2009DutchBA}, wherein the inconsistencies of the model can lead to conflicting predictions.

\section{Markov Neural Processes}

To correct the shortfalls of \glspl{NPs} while maintaining their conditional consistency, 
we now introduce a more expressive family of generative \gls{SPs} models termed Markov Neural Processes (\glspl{MNPs}). Our starting point is to extend \gls{NPs} density functions into a generalised form that can be viewed as a transition operator which transforms a trivial \gls{SPs} to a more flexible one. 
\glspl{MNPs} are then formed by stacking sequences of these transition operators to form a highly expressive and flexible model class.
The training and inference procedure for \gls{MNPs} mirrors that of \gls{NPs}, but we also introduce a novel inference model that allows for efficient learning in our scenario.

\subsection{A more general form of Neural Process density functions}

Recall the form of the \gls{NPs} marginal density functions from \Cref{eq:np_density_function}.
One can draw joint samples from $p_{x_{1:n}}(y_{1:n};\theta)$ via reparameterization using:
\begin{equation} \label{eq:np_rewrite}
    y^{(0)}_{1:n} \sim \mathcal{N}(\mathbf{0}, \mathbf{1}), \quad
    z \sim p_{\theta}(z), \quad
    y_i =  \sigma_{\theta}(z, x_i) \cdot y_i^{(0)} + \mu_{\theta}(z, x_i).
\end{equation}
The key starting point for \glspl{MNPs} is to show that this  
can be generalised to 
the case where $y_{1:n}^{(0)}$ is drawn from any given \gls{SPs} of its own, $p(f^{(0)})$, and each $y_i$ is any invertible transformation, $F_{\theta}$, of $y_i^{(0)}$, parameterized by $x_i$ and $z$.  Specifically, we introduce the following result (see \Cref{appsec: proofs} for proof).
\begin{restatable}{proposition}{npgeneral}
\label{prop:np_general}
Let $F_{\theta}(\cdot ; x, z) : \mathcal{Y} \mapsto \mathcal{Y}$ denote an invertible transformation between outputs, parameterized by the input and latent.
If $\{p_{x_{1:n}}(y^{(0)}_{1:n})\}_{x_{1:n}}$ is a valid \gls{SPs} (i.e.~it satisfies~\Cref{eq:pdf_exchangeability} and~\Cref{eq:pdf_consistency}) and
\begin{equation} \label{eq:general_form}
    y^{(0)}_{1:n} \sim \{p_{x_{1:n}}(y^{(0)}_{1:n})\}_{x_{1:n}}, \quad 
    z \sim p_{\theta}(z), \quad
    y_i = F_{\theta}(y^{(0)}_i; x_i, z).
\end{equation}
then $\{p_{x_{1:n}}(y_{1:n})\}_{x_{1:n}}$ is also a valid \gls{SPs}.
\end{restatable}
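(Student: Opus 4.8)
The plan is to argue directly with the marginal density functions, conditioning on $z$. Given $z$, each $y_i$ is a deterministic invertible function of $y_i^{(0)}$ that depends only on $x_i$, so the map $T_z:(y_1^{(0)},\dots,y_n^{(0)})\mapsto(y_1,\dots,y_n)$ acts coordinatewise and has a block-diagonal Jacobian. Writing $G_\theta(\cdot;x,z):=F_\theta^{-1}(\cdot;x,z)$ and $J_i:=\left|\partial G_\theta(y_i;x_i,z)/\partial y_i\right|$ for the per-coordinate Jacobian determinant, the change-of-variables formula gives the conditional density
\begin{equation*}
  q_{x_{1:n}}(y_{1:n}\mid z) = p_{x_{1:n}}\!\left(G_\theta(y_1;x_1,z),\dots,G_\theta(y_n;x_n,z)\right)\prod\nolimits_{i=1}^{n} J_i ,
\end{equation*}
so that $p_{x_{1:n}}(y_{1:n})=\int p_\theta(z)\,q_{x_{1:n}}(y_{1:n}\mid z)\,\d z$. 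It then suffices to show that $\{q_{x_{1:n}}(\cdot\mid z)\}_{x_{1:n}}$ satisfies \Cref{eq:pdf_exchangeability} and \Cref{eq:pdf_consistency} for each fixed $z$, because a $p_\theta(z)$-mixture of such families inherits both properties: exchangeability because each term is permutation-invariant, and consistency because marginalising out target outputs commutes with the outer integral over $z$ by Tonelli's theorem, all integrands being nonnegative.

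For exchangeability at fixed $z$: applying a permutation $\pi$ to the pairs $(x_i,y_i)$ permutes the arguments of $p_{x_{1:n}}^{(0)}(\cdot)$ in exactly the same way, since $G_\theta$ pairs the $i$-th output with the $i$-th input; and $\prod_i J_i$ is unchanged, being a product over the same set of factors. Hence $q_{\pi(x_{1:n})}(\pi(y_{1:n})\mid z)=q_{x_{1:n}}(y_{1:n}\mid z)$ follows directly from exchangeability of $\{p_{x_{1:n}}^{(0)}\}$.

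For consistency at fixed $z$ we want $\int q_{x_{1:n}}(y_{1:n}\mid z)\,\d y_{m+1:n}=q_{x_{1:m}}(y_{1:m}\mid z)$. Substituting $y_j^{(0)}=G_\theta(y_j;x_j,z)$ for $j=m+1,\dots,n$, the Jacobian factors $J_{m+1},\dots,J_n$ are precisely what this change of variables produces, leaving
\begin{equation*}
  \left(\prod\nolimits_{i=1}^{m} J_i\right)\int p_{x_{1:n}}^{(0)}\!\left(G_\theta(y_1;x_1,z),\dots,G_\theta(y_m;x_m,z),\,y_{m+1}^{(0)},\dots,y_n^{(0)}\right)\d y_{m+1:n}^{(0)} .
\end{equation*}
By consistency of $\{p_{x_{1:n}}^{(0)}\}$ the inner integral equals $p_{x_{1:m}}^{(0)}(G_\theta(y_1;x_1,z),\dots,G_\theta(y_m;x_m,z))$, and multiplying by $\prod_{i=1}^{m} J_i$ recombines exactly into $q_{x_{1:m}}(y_{1:m}\mid z)$. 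Integrating against $p_\theta(z)$, with the swap of the $z$-integral and the marginalisation justified by Tonelli, gives marginal consistency of $\{p_{x_{1:n}}(y_{1:n})\}$, completing the proof.

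The only real care needed is the change-of-variables bookkeeping in the consistency step — checking that the surviving Jacobian factors for indices $1,\dots,m$ recombine precisely into $q_{x_{1:m}}(y_{1:m}\mid z)$ with no stray density/Jacobian mismatch — plus the one-line Tonelli argument; everything else is a transcription of the hypotheses. If one wishes to avoid regularity assumptions on $F_\theta$ beyond invertibility and measurability, the same argument runs at the level of laws rather than densities: $\{q_{x_{1:n}}(\cdot\mid z)\}$ is the pushforward of the valid family $\{p_{x_{1:n}}^{(0)}\}$ under the coordinatewise invertible map $T_z$, and pushforward commutes with both coordinate permutations and marginal projections, after which one mixes over $z\sim p_\theta(z)$ exactly as above.
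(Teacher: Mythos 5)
Your proof is correct, but it takes a different route from the paper's. The paper proves \Cref{prop:np_general} by observing that the construction in \Cref{eq:general_form} is a special case of a \gls{mMTO} of the form \Cref{eq:delta_transition_operator} (with Dirac-delta conditionals), and then delegating all the work to \Cref{prop:markov_transition_construction} (such kernels are consistent and exchangeable) and \Cref{prop:markov_transition} (composing a valid \gls{SPs} with such a kernel yields a valid \gls{SPs}). You instead condition on $z$ and argue directly on the marginal densities of the \emph{transformed} process: an explicit change of variables with block-diagonal Jacobian, permutation invariance of the product of per-coordinate Jacobian factors for exchangeability, cancellation of the Jacobian factors for indices $m{+}1,\dots,n$ against the substitution $y_j^{(0)}=G_\theta(y_j;x_j,z)$ for consistency, and Tonelli to push the marginalisation through the mixture over $z$. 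Both arguments ultimately rest on the same mechanism --- the coordinatewise product structure given a shared $z$, plus consistency and exchangeability of the base process --- but yours is self-contained, makes the Jacobian bookkeeping explicit rather than hiding it inside Dirac deltas, and your closing remark (running the argument at the level of laws and pushforwards when $F_\theta$ is merely invertible and measurable rather than a diffeomorphism) is actually more careful than the paper on the regularity needed for densities to exist. What the paper's modular route buys in exchange is reuse: \Cref{prop:markov_transition,prop:markov_transition_construction} are needed anyway for the stacked $T$-step construction, so proving \Cref{prop:np_general} as a corollary costs nothing extra, whereas your direct argument would have to be generalised to the kernel setting to support the iterated chain.
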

Our next step is to realise that
\Cref{eq:general_form} can be viewed as a Markov transition in function space, denoted as $p(f|f^{(0)})$, which transforms a simpler \gls{SPs} $p(f^{(0)})$ to a more expressive one $p(f)$. We can thus generalise things further by stacking sequences of \glspl{fMTO}, denoted by $p(f^{(1)}|f^{(0)})$, \dots, $p(f^{(T)}|f^{(T-1)})$, together to form a Markov chain $f^{(0)}\rightarrow \dots \rightarrow f^{(T)}$ in function space. 
This will be the basis of \glspl{MNPs}.

\subsection{Markov chains in function space} \label{subsec: fMTOs}

Analogously to defining a \glspl{SPs} through its marginals, we indirectly specify \glspl{fMTO} using a collection of \glspl{mMTO}, denoted by $\{p_{x_{1:n}}(y_{1:n} | y^{(0)}_{1:n})\}_{x_{1:n}}$, where each \gls{mMTO} is just Markov transition operator over a finite sequence of function outputs.

To adapt the definitions of consistency and exchangeability for \glspl{SPs}
to the transition operator setting, we say that the \glspl{mMTO} are consistent if and only if, for any $1<m<n$ and sequence $x_{1:n}\in\mathcal{X}^n$,
\begin{align} \label{eq:kernel_consistency}
    \int p_{x_{1:n}}(y_{1:n} | y^{(0)}_{1:n}) \,\d y_{m+1:n} = p_{x_{1:m}}(y_{1:m} | y^{(0)}_{1:n}) = p_{x_{1:m}}(y_{1:m} | y^{(0)}_{1:m}).
\end{align}
Similarly, \glspl{mMTO} are exchangeable if, and only if, for all possible permutations $\pi$,
\begin{align} \label{eq:kernel_exchangeability}
    p_{x_{1:n}}(y_{1:n} | y^{(0)}_{1:n}) = p_{\pi(x_{1:n})}(\pi(y_{1:n}) | \pi(y^{(0)}_{1:n})).
\end{align}
Note that, if we consider $(x_i, y^{(0)}_i)$ as inputs, and $y_i$ as function outputs, the transition operator $p_{x_{1:n}}(y_{1:n} | y^{(0)}_{1:n})$ can be seen as the finite marginals of a random function $F': \mathcal{X}\times \mathcal{Y} \rightarrow \mathcal{Y}$ whose distribution is a \gls{SPs}, and the conditions \eqref{eq:kernel_consistency} and \eqref{eq:kernel_exchangeability} correspond to \eqref{eq:pdf_consistency} and \eqref{eq:pdf_exchangeability}.

Provided that these \glspl{mMTO} 
are consistent and exchangeable, the \glspl{fMTO} in function space will also be well-defined indirectly---i.e. the transition produces a well-defined \gls{SPs} given input random functions from a \gls{SPs}---as per the following result
(see \Cref{appsec: proofs} for proof):
\begin{restatable}{proposition}{markovtransition}
\label{prop:markov_transition}
If the collection of marginals before transition $\{p_{x_{1:n}}(y^{(0)}_{1:n})\}_{x_{1:n}}$ is consistent and exchangeable (as per~\Cref{eq:pdf_consistency,eq:pdf_exchangeability}) and the collection of \glspl{mMTO} $\{p_{x_{1:n}}(y_{1:n} | y^{(0)}_{1:n})\}_{x_{1:n}}$ is also consistent and exchangeable (as per~\Cref{eq:kernel_consistency,eq:kernel_exchangeability}), then the collection of marginals after transition $\{p_{x_{1:n}}(y_{1:n})\}$ is also consistent and exchangeable, hence defining a valid \gls{SPs}.
\end{restatable}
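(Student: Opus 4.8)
The plan is to work directly with the post-transition marginals, which are given by $p_{x_{1:n}}(y_{1:n}) = \int p_{x_{1:n}}(y_{1:n}\mid y^{(0)}_{1:n})\, p_{x_{1:n}}(y^{(0)}_{1:n})\,\d y^{(0)}_{1:n}$, and to verify exchangeability and consistency of this collection separately, after which Kolmogorov's consistency theorem immediately yields the claimed valid \gls{SPs}. As a preliminary sanity check, each $p_{x_{1:n}}(y_{1:n})$ is a genuine density: non-negativity is inherited from the two factors, and integrating out all of $y_{1:n}$ reduces (by the same marginalisation argument used below for consistency, taken all the way down) to $\int p_{x_{1:n}}(y^{(0)}_{1:n})\,\d y^{(0)}_{1:n}=1$.

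For exchangeability, I would fix a permutation $\pi$, write out $p_{\pi(x_{1:n})}(\pi(y_{1:n}))$ via its integral definition, and then perform the relabeling change of variables $y^{(0)}_{1:n}\mapsto\pi(y^{(0)}_{1:n})$ inside the integral. Since this is a permutation of coordinates it has unit Jacobian and leaves the integration domain invariant, so the integrand becomes $p_{\pi(x_{1:n})}(\pi(y_{1:n})\mid\pi(y^{(0)}_{1:n}))\, p_{\pi(x_{1:n})}(\pi(y^{(0)}_{1:n}))$. Applying exchangeability of the \glspl{mMTO} (\Cref{eq:kernel_exchangeability}) to the first factor and exchangeability of the input marginals (\Cref{eq:pdf_exchangeability}) to the second collapses this to exactly the integrand defining $p_{x_{1:n}}(y_{1:n})$, proving \Cref{eq:pdf_exchangeability} for the post-transition family.

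For consistency, I would integrate out $y_{m+1:n}$ and push that integral inside using Tonelli (valid since everything is non-negative), so that the inner integral $\int p_{x_{1:n}}(y_{1:n}\mid y^{(0)}_{1:n})\,\d y_{m+1:n}$ can be simplified via the two-step chain in \Cref{eq:kernel_consistency} to $p_{x_{1:m}}(y_{1:m}\mid y^{(0)}_{1:m})$. The crucial point is that the result no longer depends on $y^{(0)}_{m+1:n}$, so those variables now appear only through $p_{x_{1:n}}(y^{(0)}_{1:n})$; marginalising them out with consistency of the input marginals (\Cref{eq:pdf_consistency}) leaves $\int p_{x_{1:m}}(y_{1:m}\mid y^{(0)}_{1:m})\, p_{x_{1:m}}(y^{(0)}_{1:m})\,\d y^{(0)}_{1:m}$, which is precisely $p_{x_{1:m}}(y_{1:m})$.

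The two substitutions and the marginalisation are routine; the only steps that need care are the measure-theoretic bookkeeping — justifying the interchange of integrals (Tonelli suffices, given non-negativity) and checking that the manipulations remain valid when $\mathcal{Y}$ is more general than $\mathbb{R}$ — and invoking the full two-step equality in \Cref{eq:kernel_consistency} so that the reduced conditional genuinely depends only on $y^{(0)}_{1:m}$. An alternative, essentially equivalent route would exploit the remark that an exchangeable and consistent family of \glspl{mMTO} is itself the finite-dimensional marginal family of a \gls{SPs} over $\mathcal{X}\times\mathcal{Y}$ and then compose, but the direct computation above is cleaner and avoids having to extend \Cref{prop:np_general} from deterministic invertible maps to stochastic kernels.
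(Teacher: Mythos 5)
Your proposal is correct and follows essentially the same route as the paper's proof: write the post-transition marginal as the composition integral, establish exchangeability by permuting the integration variables and invoking \Cref{eq:pdf_exchangeability} and \Cref{eq:kernel_exchangeability}, and establish consistency by exchanging the order of integration and applying \Cref{eq:kernel_consistency} followed by \Cref{eq:pdf_consistency}. Your additional remarks on Tonelli and the unit Jacobian of the coordinate permutation merely make explicit what the paper leaves implicit.
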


Furthermore, a Markov chain in function space can be constructed by a sequence of \glspl{fMTO} $p(f^{(1)}|f^{(0)})$, \dots, $p(f^{(T)}|f^{(T-1)})$ where $p(f^{(t)}|f^{(t-1)})\coloneqq \{p_{x_{1:n}}(y^{(t)}_{1:n}|y^{(t-1)}_{1:n})\}_{x_{1:n}}$. With repeated applications of \Cref{prop:markov_transition}, if the initial state $\{p_{x_{1:n}}(y^{(0)}_{1:n})\}_{x_{1:n}}$ is exchangeable and consistent, $\{p_{x_{1:n}}(y^{(T)}_{1:n})\}_{x_{1:n}}$ at time $T$ is also exchangeable and consistent, hence defining a \gls{SPs} $p(f^{(T)})$.

\Cref{eq:general_form} then provides a valid construction of consistent and exchangeable \glspl{mMTO} by introducing an auxiliary latent variable $z$. The transition operator is written as
\begin{align} 
    p_{x_{1:n}}(y_{1:n} | y^{(0)}_{1:n}; \theta) = \int p_{\theta}(z) \prod_{i=1}^{n} \delta(y_i - F_{\theta}(y^{(0)}_i ; x_i, z)) \,\d z,
\label{eq:delta_transition_operator}
\end{align}
where both $p_{\theta}$ and $F_{\theta}$ in \Cref{eq:delta_transition_operator} can be parameterised by neural networks.
Critically, we can extend~\Cref{prop:markov_transition} to cover these auxiliary settings in \Cref{eq:delta_transition_operator} as well, as per the following result (see \Cref{appsec: proofs} for proof):
\begin{restatable}{proposition}{markovtransitionconstruction}
\label{prop:markov_transition_construction}
\glspl{mMTO} in the form of \Cref{eq:delta_transition_operator} are consistent and exchangeable.
\end{restatable}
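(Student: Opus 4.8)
The plan is to verify the two defining properties of an \gls{mMTO} — consistency \Cref{eq:kernel_consistency} and exchangeability \Cref{eq:kernel_exchangeability} — directly from the closed form \Cref{eq:delta_transition_operator}, with the Dirac factors handled carefully.

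First I would dispatch exchangeability, which is essentially bookkeeping. For a permutation $\pi$, writing $p_{\pi(x_{1:n})}(\pi(y_{1:n}) \mid \pi(y^{(0)}_{1:n}); \theta)$ via \Cref{eq:delta_transition_operator} gives $\int p_{\theta}(z) \prod_{i=1}^{n} \delta(y_{\pi(i)} - F_{\theta}(y^{(0)}_{\pi(i)} ; x_{\pi(i)}, z))\,\d z$. The $i$-th factor depends only on the triple indexed by $\pi(i)$, the prior $p_{\theta}(z)$ is a single global latent independent of the ordering, and multiplication is commutative; so reindexing $j = \pi(i)$ turns the integrand back into $p_{\theta}(z)\prod_{j=1}^{n}\delta(y_j - F_{\theta}(y^{(0)}_j; x_j, z))$, which is exactly \Cref{eq:delta_transition_operator}. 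This establishes \Cref{eq:kernel_exchangeability}.

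Next, consistency. Fix $1 < m < n$, split the product in \Cref{eq:delta_transition_operator} into factors with $i \le m$ and those with $m < i \le n$, use Fubini to move $\int \d y_{m+1:n}$ inside the $z$-integral, and integrate each trailing Dirac factor over its own variable, each contributing $\int_{\mathcal{Y}}\delta(y_i - F_{\theta}(y^{(0)}_i; x_i, z))\,\d y_i = 1$. What survives is $\int p_{\theta}(z)\prod_{i=1}^{m}\delta(y_i - F_{\theta}(y^{(0)}_i; x_i, z))\,\d z$, which depends on $y^{(0)}_{1:n}$ only through $y^{(0)}_{1:m}$; this is simultaneously the middle expression of \Cref{eq:kernel_consistency} (there is literally no $y^{(0)}_{m+1:n}$ dependence left) and the right-hand expression (it is \Cref{eq:delta_transition_operator} with $n$ replaced by $m$), so both equalities hold at once.

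The main obstacle is that the Dirac factors are not genuine Lebesgue densities, so the "integrate the delta to $1$" and Fubini steps above need justification. I would make this rigorous by phrasing everything at the level of measures: for fixed $y^{(0)}_{1:n}$ and $z$, the conditional law of $y_{1:n}$ is the point mass at $(F_{\theta}(y^{(0)}_1; x_1, z), \dots, F_{\theta}(y^{(0)}_n; x_n, z))$, and $p_{x_{1:n}}(\cdot \mid y^{(0)}_{1:n}; \theta)$ is the $z \sim p_{\theta}$ mixture of these point masses. Consistency then says that pushing this measure forward under projection onto the first $m$ coordinates yields the analogous mixture of point masses at $(F_{\theta}(y^{(0)}_1; x_1, z), \dots, F_{\theta}(y^{(0)}_m; x_m, z))$, which is immediate since the projected coordinates are a deterministic function of $z$ and $y^{(0)}_{1:m}$ alone; exchangeability is the observation that permuting coordinates commutes with permuting the defining tuples $(x_i, y^{(0)}_i)$, leaving the mixture law unchanged. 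If a density-level statement is preferred, one can additionally invoke the assumed invertibility of each $F_{\theta}(\cdot; x, z)$ and perform a change of variables against a continuous ambient measure, but the measure-theoretic argument already settles the proposition, and invertibility is in fact not needed for consistency or exchangeability of the \gls{mMTO} itself.
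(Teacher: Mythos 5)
Your proof is correct and follows essentially the same route as the paper's: exchangeability by reindexing the product of per-point factors under the shared latent $z$, and consistency by integrating each trailing factor over its own $y_i$ to obtain $1$. The only difference is presentational — the paper sidesteps the Dirac-delta rigor issue by proving the result for the more general form $\int p_{\theta}(z)\prod_{i=1}^n p_{\theta}(y_i\mid y^{(0)}_i,x_i,z)\,\d z$ with arbitrary per-point conditionals, of which the delta construction is a special case, whereas you address it directly with the (equally valid, and arguably cleaner) measure-theoretic pushforward argument.
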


\subsection{Parameterisation, inference and training} \label{subsec: parameterization inference and training}

We can now define a \gls{MNPs} as a  sequence of neural \glspl{fMTO}, with each \gls{fMTO} indirectly specified via a collection of \glspl{mMTO} in the form of \Cref{eq:delta_transition_operator}.
If we specify a distribution over the sequence of auxiliary latent variables $p_{\theta}(z^{(1:T)})$ along with an initial \gls{SPs} with marginals $p_{x_{1:n}}(y^{(0)}_{1:n})$,  we can write down the marginal distribution over function outputs $y_{1:n}:=y^{(T)}_{1:n}$ for a sequence of inputs $x_{1:n}$ under the \gls{MNPs} model as  (see \Cref{fig:mnp_generation} for illustration and \Cref{appsec: proofs} for derivation):
\begin{align} \label{eq: delta_mnp_density}
     p_{x_{1:n}}(y_{1:n}; \theta) &= \int p_{\theta}(z^{(1:T)}) p_{x_{1:n}}(y^{(0)}_{1:n}) \prod_{t=1}^{T} \prod_{i=1}^n p^{(t)}_{\theta}(y^{(t)}_i \,|\, y^{(t-1)}_i, x_i, z^{(t)}) \d y^{(0:T-1)}_{1:n} \d z^{(1:T)} \\ 
     &= \int p_{\theta}(z^{(1:T)}) p_{x_{1:n}}(y^{(0)}_{1:n}) \prod_{t=1}^{T} \prod_{i=1}^n \Big| \textnormal{det} \frac{\partial F^{(t)}_{\theta}(y^{(t-1)}_i; x_i, z^{(t)})}{\partial y^{(t-1)}_i} \Big|  \d z^{(1:T)}
\nonumber 
\end{align}
According to \Cref{prop:markov_transition,prop:markov_transition_construction}, $\{p_{x_{1:n}}(y_{1:n}; \theta)\}_{x_{1:n}}$ defines a valid \gls{SPs} parameterised by $\theta$.
The initial \gls{SPs} can be arbitrarily chosen, as long as we can evaluate its marginals $p_{x_{1:n}}(y^{(0)}_{1:n})$. In our experiments, we use a trivial \gls{SPs} where all the outputs are i.i.d.\ standard normal distributed. We adopt normalising flows \citep{rezende2015variational,papamakarios2019normalizing,durkan2019neural} to parameterise the invertible transformations $F^{(t)}_{\theta}$. 

\begin{figure*}
     \centering
     \begin{subfigure}[b]{0.49\textwidth}
         \centering
         \includegraphics[width=\textwidth]{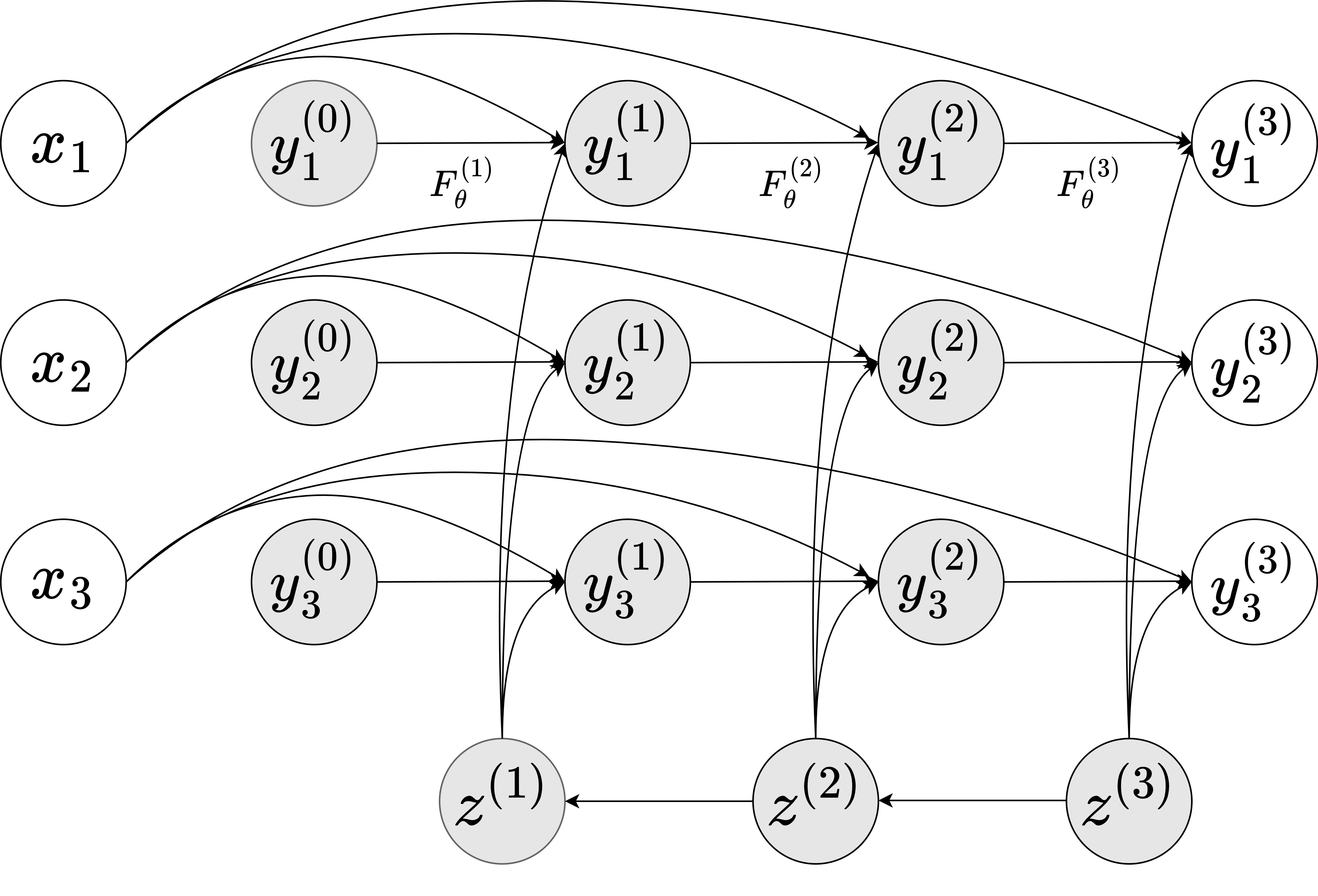}
         \caption{}
         \label{fig:mnp_generation}
     \end{subfigure}
     \hfill
     \begin{subfigure}[b]{0.49\textwidth}
         \centering
         \includegraphics[width=\textwidth]{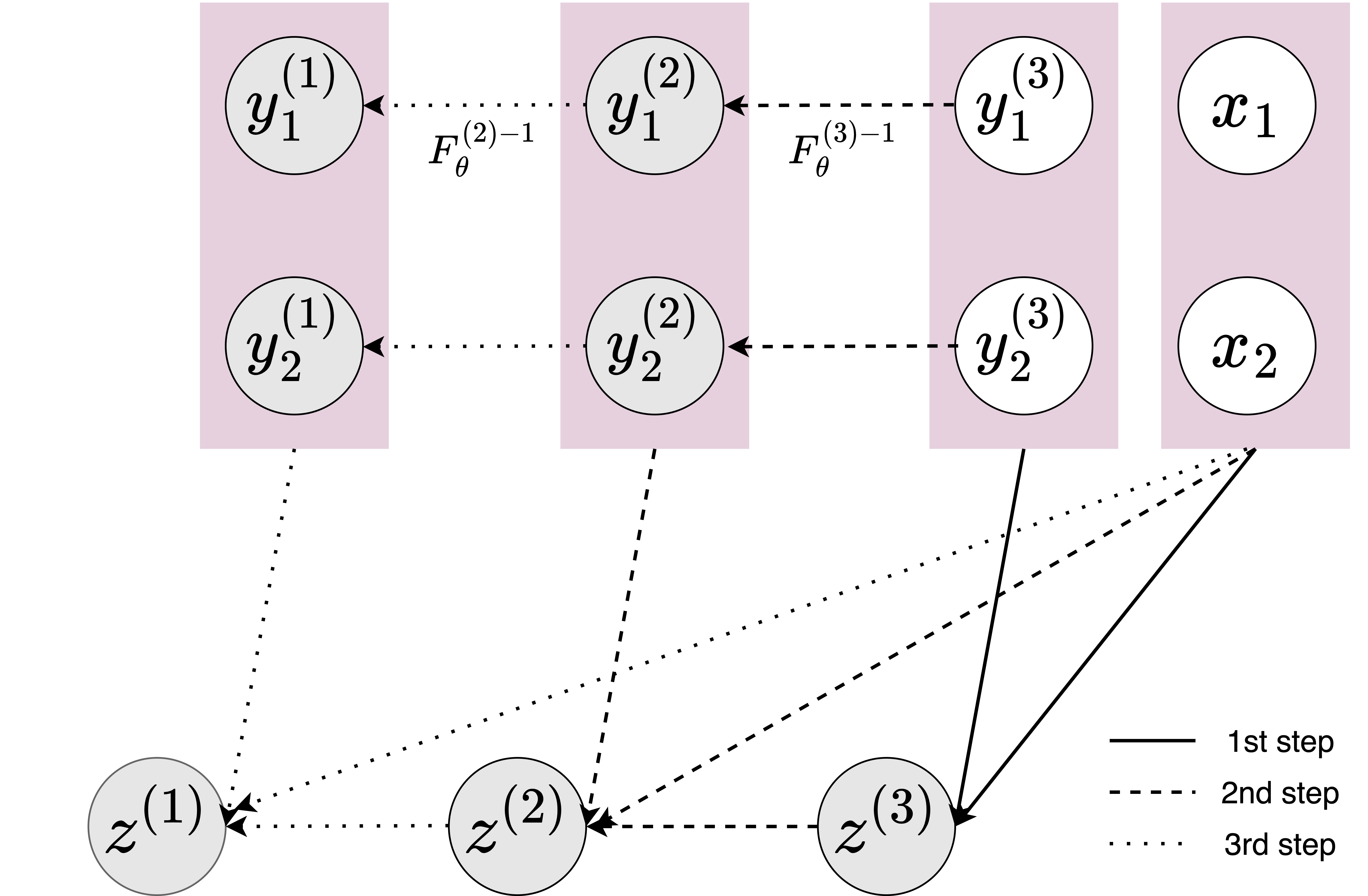}
         \caption{}
         \label{fig:mnp_inference}
     \end{subfigure}
        \caption{
        (a) \textbf{Consistent generative models on finite sets}. Observed variables are shown in white, hidden variables are shaded.
        Conditioned on the function inputs $x_{1:n}$ and the auxiliary latent variables $z^{(1:T)}$, the function outputs are transformed independently using instance-wise conditional normalising flows (CNFs). If the initial states $\{(x_i,y_i)\}_{i=1:n}$ come from a \gls{SPs}, the construction will ensure that the collection of marginals $p(y^{(t)}_{1:n}|x_{1:n})$ are consistent under marginalisation for any $t$. (b) \textbf{Permutation-invariant inference models}. Auxiliary latent variables $z^{(1:T)}$ are inferred in reverse order in our inference model. We reuse the parameters of CNFs in the generative model to compute function values $y^{t}_{1:n}$ at all time steps, and parameterise the conditional distribution $q^{(t)}_{x_{1:n}}(z^{(t)} | z^{(t+1)}, y^{(t)}_{1:n}; \phi)$ with permutation-invariant neural networks.}
        \label{fig:mnp}
\end{figure*}

To integrate over latent variables $z^{(1:T)}$ in \Cref{eq: delta_mnp_density}, we introduce a posterior approximation $q_{x_{1:n}}(z^{(1:T)} | y_{1:n};\phi)$. For many applications, we need to learn and query the conditional distributions of a target $\mathcal{T}=\{(x_i,y_i)\}_{i=m+1}^n$ given context $\mathcal{C}=\{(x_i,y_i)\}_{i=1}^m$.
To better reflect the desired model behaviour at test time, similar to \glspl{NPs} \citep{Garnelo2018NeuralP}, we train the model by maximising the following approximation of the conditional log-likelihood w.r.t.\ both model $\theta$ and variational $\phi$ parameters:
\begin{align}
\begin{split}
    \mathcal{L}_{\theta,\phi}&(y_{1:n}; x_{1:n}) :=
    \label{eq:conditional_mnp} \\
    &\mathbb{E}_{q_{x_{1:n}}(z^{(1:T)} \,|\, y_{1:n}; \phi)} \left[ \log p_{x_{m+1:n}}(y_{m+1:n} \,|\, z^{(1:T)}, \theta) 
    +\log \frac{q_{x_{1:m}}(z^{(1:T)} \,|\, y_{1:m}; \phi)}{q_{x_{1:n}}(z^{(1:T)} \,|\, y_{1:n}; \phi)} \right] 
\end{split}
\end{align}
where $\log p_{x_{m+1:n} | x_{1:m}}(y_{m+1:n} \,|\, y_{1:m}; \theta) \gtrapprox 
    \mathcal{L}_{\theta,\phi}(y_{1:n}; x_{1:n})$. Here $q_{x_{1:m}}(z^{(1:T)} \,|\, y_{1:m}; \phi)$  can be seen as an approximate prior conditioned on the context $\{(x_i,y_i)\}_{i=1}^m$, while $q_{x_{1:n}}(z^{(1:T)} \,|\, y_{1:n}; \phi)$ is the approximate posterior after the target $\{(x_i,y_i)\}_{i=m+1}^n$ is observed. Thus the prior and the posterior share the same inference model with parameters $\phi$. The training objective is optimised by stochastic gradient descent, and the gradients $\nabla_{\theta,\phi} \mathcal{L}_{\theta,\phi}(y_{1:n}; x_{1:n})$ can be efficiently estimated with low variance using the reparameterisation trick \citep{mohamed2020monte}.

However, different from \glspl{NPs}, the latent variables $z^{(1:T)}$ for \glspl{MNPs} are a sequence. 
Therefore, we propose a different inference model which shares parameters with the generative model above.
Firstly, we write $q_{x_{1:m}}(z^{(1:T)} \,|\, y_{1:m}; \phi)$ in a factorised form:
\begin{align} \label{eq:approximate_posterior}
    q_{x_{1:n}}(z^{(1:T)} \,|\, y_{1:n}; \phi) \!=\! \prod\nolimits_{t=1}^{T} q^{(t)}_{x_{1:n}}(z^{(t)} | z^{(t+1)}, y^{(t)}_{1:n}; \phi)
\end{align}
where $\phi$ are  variational parameters and we set $z^{(T+1)} = \mathbf{0}$, $y_{1:n}=y^{(T)}_{1:n}$. 
In our implementation, we use a Gaussian distribution for each factor $q^{(t)}_{x_{1:n}}$, with mean $ \mu^{(t)}_{\phi}(z^{(t+1)}, y^{(t)}_{1:n}, x_{1:n})$ and variance $\Sigma^{(t)}_{\phi}(z^{(t+1)}, y^{(t)}_{1:n}, x_{1:n}))$, 
Here $\mu^{(t)}_{\phi}$ and $\Sigma^{(t)}_{\phi}$ are parameterised functions invariant to the permutation of data points $\{(x_i,y_i)\}_{i=1}^n$, i.e.
\begin{align*}
    \mu^{(t)}_{\phi}(z^{(t+1)}, y^{(t)}_{1:n}, x_{1:n}) = \mu^{(t)}_{\phi}(z^{(t+1)}, y^{(t)}_{\pi(1:n)}, x_{\pi(1:n)}) \\
    \Sigma^{(t)}_{\phi}(z^{(t+1)}, y^{(t)}_{1:n}, x_{1:n}) = \Sigma^{(t)}_{\phi}(z^{(t+1)}, y^{(t)}_{\pi(1:n)}, x_{\pi(1:n)}).
\end{align*}
We can parameterise them by first having 
\begin{align}
    r^{(t)} = \operatorname{\textsc{SetEncoder}}(y^{(t)}_{1:n}, x_{1:n}))
\end{align}
where $\operatorname{\textsc{SetEncoder}}$ could be a Set Transformer \citep{lee2019set}, Deep Sets \citep{zaheer2017deep}, then taking 
\begin{align}
    \mu^{(t)}_{\phi}(z^{(t+1)}, y^{(t)}_{1:n}, x_{1:n}) = \operatorname{\textsc{MLP}}_{\mu}(z^{(t+1)}, r^{(t)}) \\ 
    \sigma^{(t)}_{\phi}(z^{(t+1)}, y^{(t)}_{1:n}, x_{1:n}) = \operatorname{\textsc{MLP}}_{\sigma}(z^{(t+1)}, r^{(t)}).
\end{align}

In \Cref{eq:approximate_posterior}, the intermediate function outputs $y_{1:n}^{(1:T-1)}$ are not observed. 
However, we can sample them autoregressively by iterating between sampling $z^{t+1}$ and calculating $y^{(t)}_i = (F^{(t+1)}_{\theta})^{-1}(y^{(t+1)}_i; x_i,z^{(t+1)})$ for each $i$ (see \Cref{fig:mnp_inference}).  Therefore, we share the parameters of the normalising flows $F_{\theta}^{(1:T)}$ between the generative and inference models, leading to better performance. This idea is originally was proposed by \citet{cornish2020relaxing}, except that our normalising flows are applied to a sequence of function outputs $y_{1:n}$ given the inputs $x_{1:n}$. 

In practice, the inference model $q_{x_{1:n}}(z^{(1:T)} \,|\, y_{1:n}; \phi)$ provides an approximate prior/posterior. Ideally, if it gives the exact posterior, conditional consistency would hold perfectly. However, when the inference model is approximate, the degree of conditional consistency depends on the discrepancy between the inference model and the true posterior.
Predictive \gls{SPs} models also have a stochastic mapping conditioned on the context, represented as $q_{x_{1:m}}(z^{(1:T)}|y_{1:m}; \phi)$. However, it is important not to confuse this with the inference model, as they do not serve to approximate the posterior.

\section{Related work}

\looseness=-1
\textbf{Bayesian nonparametric models}~~ Bayesian nonparametric models such as \glspl{GPs} \citep{rasmussen2006gaussian,ghahramani1999tutorial} and Student-t processes \citep{shah14,bui2016student,chung2019non} provide common classical approaches for modelling distributions over functions. Under these models, any conditional distribution of a target given a context can be directly evaluated, and both marginal/conditional consistency is guaranteed by construction (if all computation is exact).
However, they can be too restrictive for some applications, e.g.~any conditional or marginal distribution of a \gls{GPs} is also Gaussian. Further, the evaluation of conditional densities is typically computationally intensive (cubic w.r.t. the context due to matrix inversion).
Deep \glspl{GPs} \citep{damianou2013deep,wilson2016deep} use \glspl{GPs} as building blocks to construct deep architectures, designed to be flexible enough to model a wide range of complex systems. 
However, only the hyperparameters for each GP layer are tunable, which means they can still be restrictive when modelling highly non-Gaussian data. 
 
\textbf{Copula-based processes}~~ 
In multivariate statistics, a copula function refers to a multivariate function that describes the dependence between random variables and links the marginal distributions of each variable to the joint distribution of all the variables \citep{nelsen2007introduction,joe1997multivariate}. 
Similarly, a copula process \citep{wilson2010copula} describes the dependency between infinitely many random variables independent of their marginal distributions. 
\cite{korshunova2018bruno,KORSHUNOVA2020305,maronas2021transforming} exploited this independence to specify the more flexible \glspl{CBPs} by combing the copula processes of existing \glspl{SPs} with flexible models of marginal distributions based on normalising flows \citep{rezende2015variational,papamakarios2019normalizing,durkan2019neural}; 
the consistency of \glspl{CBPs} is guaranteed as long as the base \glspl{SPs} are consistent. However, 
\glspl{CBPs}
are still restrictive in terms of modelling relationship between variables because the underlying copula processes still come from known \glspl{SPs}. For example, BRUNOs \citep{korshunova2018bruno,KORSHUNOVA2020305} 
have the same underlying copula processes as \glspl{GPs}, so 
they cannot be used to model data with non-Gaussian dependencies.

\looseness=-1
\textbf{Neural process family}~~ \glspl{NPs} \citep{Garnelo2018NeuralP} are generative \gls{SPs} models which specify a prior \gls{SPs} and rely on Bayesian inference to compute conditional densities. To improve expressivity of the original \glspl{NPs}, \cite{kim2018attentive,foong2020meta,bruinsmagaussian} explore predictive \gls{SPs} models that directly learn mappings from context to predictive \glspl{SPs}. More specifically, \glspl{ANPs} \citep{kim2018attentive} incorporate cross-attention modules to model the interaction between context and target. \glspl{ConvNPs} \citep{foong2020meta} produce a functional representation with a convolutional architecture which parameterizes a stationary predictive SP over the target, given the context.
In \glspl{GNPs} \citep{bruinsmagaussian}, predictive \glspl{SPs} are modelled by \glspl{GPs} where the mean/kernel functions are directly produced by neural networks conditioned on the context. 
However, simply setting the context to an empty set in these models does not yield more expressive generative \gls{SPs} models. In the absence of context, \glspl{ANPs} and \glspl{GNPs} become standard \glspl{NPs} and \glspl{GPs} respectively, without any additional expressivity. \glspl{ConvNPs} can indeed be adjusted for generative \gls{SPs} models where the latent variables are \emph{random functions}. However, it remains unclear how to perform Bayesian inference in function space \citep{foong2020meta}. 
Conditional \gls{NPs} families, e.g. \glspl{CNPs} \citep{garnelo2018conditional}, \glspl{ConvCNPs} \citep{gordon2019convolutional} are another category of predictive \gls{SPs} models, which  make a strong assumption that all targets are independent given the context.
Recently, \glspl{NDPs} \citep{dutordoir2022neural} provide an alternative to the \gls{NPs} models above, showing promising predictive performance. However, both marginal and conditional consistency are sacrificed in \glspl{NDPs}.

\section{Experiments}

\looseness=-1
Our experiments aim to answer the questions: 
\begin{inparaenum}[1)]
    \item Can the proposed framework of \glspl{MNPs} offer better expressivity than \glspl{NPs}?
    \item Compared to Bayesian nonparametric models,
    do neural parameterised \glspl{SPs} have advantages, especially on highly non-Gaussian data?
    \item How well do \glspl{MNPs} perform on scientific problems?
\end{inparaenum}
All experiments are performed using PyTorch \citep{paszke2019pytorch}.
For details about datasets, please see \Cref{appsubsec: data}. For additional experimental details such as hyperparameters and architectures, please refer to \Cref{appsubsec: hyperparameters}. 

\subsection{1D function regression}

We first consider 1D function regression problems to perform controlled comparisons between \glspl{GPs}, \glspl{CBPs}, \glspl{NPs}, and \glspl{MNPs}. \Cref{tb:1d_regression} shows results across a range of datasets, including samples from the GP prior (we consider three different kernels) as well as more challenging non-GP data. 

For datasets sampled from \glspl{GPs}, we also include the performance of oracle \glspl{GPs} with the right hyperparameters for generating the data. As shown, \glspl{GPs} and \glspl{CBPs} are close to the oracle except for samples generated using a periodic kernel, where \glspl{CBPs} struggle to learn the right kernel hyperparameters due to the difficulty of optimisation. For neural parameterised models, the performance gaps between the \glspl{MNPs} and the oracle \glspl{GPs} are much smaller than for \glspl{NPs}. 

For non-GP samples such as monotonic functions, convex functions and samples from certain SDEs, \glspl{MNPs} perform the best across all models. \glspl{CBPs} obtain better marginal log-likelihood than \glspl{GPs} because the pointwise marginals are transformed using normalising flows, but their underlying copula processes (which capture the dependence between data points) are still Gaussian, restricting their ability to model highly non-Gaussian data compared to neural parameterised models e.g. \glspl{MNPs}, \glspl{NPs}. 

\renewcommand{\arraystretch}{1.1}
\begin{table*}[t] 
  \centering
  \footnotesize
  \caption{\textbf{Estimated marginal log-likelihood of \glspl{SPs} on 1D function regression problems}. We compare (a) Oracle models (when available). (b) \glspl{GPs} with optimised hyperparameters for additive kernels with three component kernels including an RBF kernel, a Matern kernel and a periodic kernel. (c) \glspl{CBPs} which combine Gaussian copula processes with neural spline flows \citep{durkan2019neural}. (d) \glspl{NPs}. (e) \glspl{MNPs} with $7$ transition steps. Each experiment is repeated $5$ times and we report the mean and standard errors of the marginal log-likelihood normalised by the number of points in the target. When latent models such as \glspl{MNPs}, \glspl{NPs} are used, we obtain estimations of marginal log-likelihoods on test data using the IWAE objective \cite{burda2016importance} with $20$ latent samples.}
  \begin{tabular}{lcccc|ccc}
    \toprule
      && \multicolumn{3}{c|}{\emph{\textbf{Samples from GPs}}} &  \multicolumn{3}{c}{\emph{\textbf{Non-GP Data}}}\\
    Model  && RBF Kernel & Matern Kernel & Periodic Kernel & Monotonic & Convex & SDEs \\
    \midrule
    Oracle && $2.846 {\scriptstyle \pm 0.012}$ & $2.709 {\scriptstyle \pm 0.013}$ & $0.641 {\scriptstyle \pm 0.006}$ & --- & --- & --- \\
    GPs && $\bm{2.844} {\scriptstyle \pm 0.013}$ & $\bm{2.708} {\scriptstyle \pm 0.014}$ & $0.419 {\scriptstyle \pm 0.013}$ & $0.633 {\scriptstyle \pm 0.059}$ & $2.976 {\scriptstyle \pm 0.224}$ & $1.719 {\scriptstyle \pm 0.034}$ \\
    CBPs && $2.628 {\scriptstyle \pm 0.016}$ & $2.604 {\scriptstyle \pm 0.015}$ & $0.169 {\scriptstyle \pm 0.022}$ & $1.776 {\scriptstyle \pm 0.088}$ & $4.268 {\scriptstyle \pm 0.035}$ & $1.842 {\scriptstyle \pm 0.024}$ \\
    NPs && $0.935 {\scriptstyle \pm 0.019}$ & $1.115 {\scriptstyle \pm 0.021}$ & $0.356 {\scriptstyle \pm 0.020}$ & $1.823 {\scriptstyle \pm 0.006}$ & $1.956 {\scriptstyle \pm 0.004}$ & $1.621 {\scriptstyle \pm 0.009}$ \\
    MNPs && $2.491 {\scriptstyle \pm 0.024}$ & $2.290 {\scriptstyle \pm 0.021}$ & $\bm{0.594} {\scriptstyle \pm 0.032}$ & $\bm{2.755} {\scriptstyle \pm 0.010}$ & $\bm{5.582} {\scriptstyle \pm 0.081}$ & $\bm{1.942} {\scriptstyle \pm 0.019}$ \\
    \bottomrule
  \end{tabular}
  \label{tb:1d_regression}
\end{table*}

\subsection{Contextual bandits} \label{subsec:contexual_bandits}

\begin{figure}[t]
\centering
\includegraphics[width=0.6\textwidth]{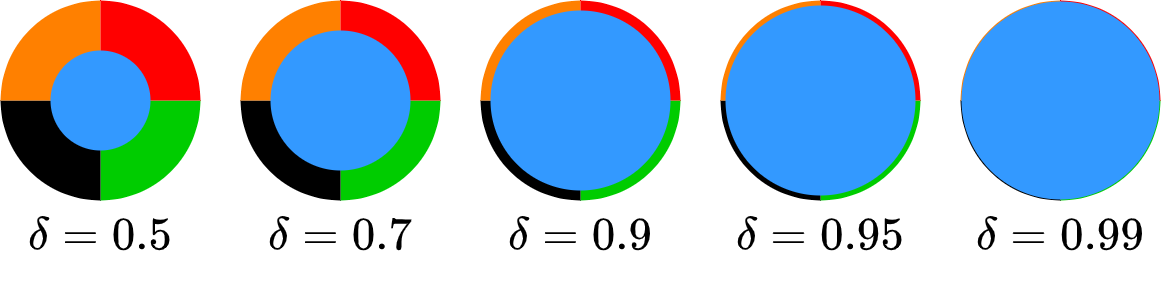}
\caption{Wheel contextual bandits with varying exploration parameter $\delta$. The optimal actions are $a_1,\ldots,a_5$ when the context point are in blue, yellow, red, green and black regions respectively.}
\label{fig:wheels}
\end{figure}

Contextual bandits are a class of problems where agents repeatedly choose from a set of actions based on context and receive rewards. The goal is then to learn a policy that maximizes expected cumulative reward over time. Contextual bandits find applications in real-time decision making problems such as resource allocation and online advertising.

Similarly to \citet{Garnelo2018NeuralP}, we use the wheel bandit problem \citep{riquelme2018deep} (see \Cref{appsubsec: data}) to evaluate our approach: a unit circle is partitioned into 5 regions 
(see \Cref{fig:wheels}) 
whose sizes are controlled by an exploration parameter $\delta$. There are 5 actions $a_1,a_2,a_3,a_4,a_5$, and their associated reward depend on a 2D contextual coordinate $X=(X_1,X_2)$ uniformly sampled from within the circle. 
If $||X||\leq \delta$, $a_1$ is the optimal action with reward sampled from $\mathcal{N}(1.2, 0.01^2)$, and taking any other action would yield a reward $r\sim \mathcal{N}(1.0, 0.01^2)$. If $||X|| > \delta$, the optimal action depends on which of the remaining $4$ region $X$ falls into and choosing it would yield a reward $r\sim \mathcal{N}(50,0.01^2)$. Under this circumstance, any other action yield a reward $\mathcal{N}(1.0,0.01^2)$ except that $a_1$ receives $r\sim \mathcal{N}(1.2,0.01^2)$.
We follow the experimental setup from \citet{Garnelo2018NeuralP} and only include models with a pre-training procedure (see \Cref{appsubsec: hyperparameters} for details). As can be seen in \Cref{tb:bandits}, \glspl{MNPs} significantly outperform baselines (taken from the results of \cite{Garnelo2018NeuralP}) across different exploration rates $\delta$.

\setlength{\tabcolsep}{8pt}
\begin{table*}[t] 
  \centering
  \footnotesize
  \caption{\textbf{Results on wheel contextual bandit problems}. We use an increasing value of $\delta$, where more exploration is needed with higher $\delta$. We report the mean and standard deviation of both cumulative and simple regrets (a performance measure of the final policy, estimated by the mean cumulative regrets in the last $500$ steps) over $100$ trials. Results are normalised by the performance of a uniform agent which chooses actions with equal probability.}
  \begin{tabular}{lccccc}
    \toprule
    $\delta$  & \textbf{0.5} & \textbf{0.7} & \textbf{0.9}  & \textbf{0.95}  & \textbf{0.99} \\
    \midrule
    \emph{\textbf{Cumulative regrets\;\;\;\;\;\;\;\;\;\;\;\;}} \\ 
    Uniform & $100.0 {\scriptstyle \pm 0.00}$ & $100.0 {\scriptstyle \pm 0.00}$ & $100.0 {\scriptstyle \pm 0.00}$ & $100.0 {\scriptstyle \pm 0.00}$ & $100.0 {\scriptstyle \pm 0.00}$ \\
    MAML & $2.95 {\scriptstyle \pm 0.12}$ & $3.11 {\scriptstyle \pm 0.16}$ & $4.84 {\scriptstyle \pm 0.22}$ & $7.01 {\scriptstyle \pm 0.33}$ & $22.93 {\scriptstyle \pm 1.57}$ \\ 
    NPs & $1.60 {\scriptstyle \pm 0.06}$ & $1.75 {\scriptstyle \pm 0.05}$ & $3.31 {\scriptstyle \pm 0.10}$ & $5.71 {\scriptstyle \pm 0.24}$ & $22.13 {\scriptstyle \pm 1.23}$ \\
    MNPs & $\bm{1.08} {\scriptstyle \pm 0.00}$ & $\bm{1.23} {\scriptstyle \pm 0.01}$ & $\bm{2.10} {\scriptstyle \pm 0.01}$ & $\bm{2.07} {\scriptstyle \pm 0.01}$ & $\bm{5.46} {\scriptstyle \pm 0.05}$ \\
   \midrule
   \emph{\textbf{Simple regrets}} \\ 
   Uniform & $100.0 {\scriptstyle \pm 0.00}$ & $100.0 {\scriptstyle \pm 0.00}$ & $100.0 {\scriptstyle \pm 0.00}$ & $100.0 {\scriptstyle \pm 0.00}$ & $100.0 {\scriptstyle \pm 0.00}$ \\
    MAML & $2.49 {\scriptstyle \pm 0.12}$ & $3.00 {\scriptstyle \pm 0.35}$ & $4.75 {\scriptstyle \pm 0.48}$ & $7.10 {\scriptstyle \pm 0.77}$ & $22.89 {\scriptstyle \pm 1.41}$ \\ 
    NPs & $\bm{1.04} {\scriptstyle \pm 0.06}$ & $\bm{1.26} {\scriptstyle \pm 0.21}$ & $2.90 {\scriptstyle \pm 0.35}$ & $5.45 {\scriptstyle \pm 0.47}$ & $21.45 {\scriptstyle \pm 1.3}$ \\
    C-BRUNO & $1.32 {\scriptstyle \pm 0.06}$ & $1.43 {\scriptstyle \pm 0.07}$ & $3.44 {\scriptstyle \pm 0.13}$ & $6.17 {\scriptstyle \pm 0.21}$ & $21.52 {\scriptstyle \pm 0.41}$ \\
    MNPs & $\bm{1.14} {\scriptstyle \pm 0.06}$ & $\bm{1.29} {\scriptstyle \pm 0.08}$ & $\bm{2.12} {\scriptstyle \pm 0.16}$ & $\bm{2.24} {\scriptstyle \pm 0.16}$ & $\bm{5.22} {\scriptstyle \pm 0.38}$ \\
    \bottomrule
  \end{tabular}
  \label{tb:bandits}
\end{table*}

\subsection{Geological inference}

\begin{figure*}[t]
\centering
\includegraphics[width=0.8\textwidth]{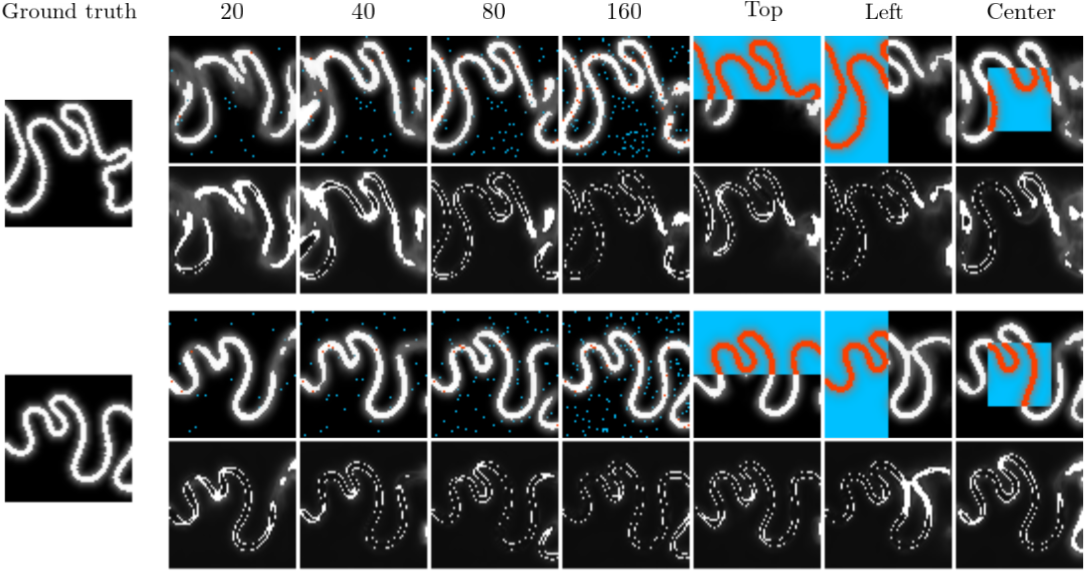}
\caption{\textbf{Inferred geology conditioned on measurements}. Given a small number of physical measurements (red pixels indicate positive measurements while blue pixels indicate negative ones), we show the predictive mean (first row in each panel) and standard deviation (second row in each panel). Different columns correspond to different context sets. As can be seen, \glspl{MNPs} make predictions that are consistent with the data while showing larger uncertainty when fewer context points are available (or when a prediction is made far from a context point).
}
\label{fig:geology-results}
\end{figure*}

In geostatistics, one is often interested in inferring the geological structure of an area given only a sparse set of measurements. This problem has traditionally been tackled with variants of \gls{GPs} regression \citep{cressie1990origins} (often referred to as Kriging in this context). However, several geological patterns (e.g. fluvial patterns) are highly complex and cannot be properly captured by these methods. To address this, several geostatistical models use a single training image to extract geological patterns and match these to the measurements \citep{strebelle2002conditional, zhang2006filter}. However, these approaches generally fail to produce realistic patterns capturing the variability of real geology.

\setlength{\tabcolsep}{4pt}
\begin{table*}[t] 
  \centering
  \footnotesize 
  \caption{\textbf{Test marginal log-likelihood on geology data}. Uniform context points are uniformly sampled from within the 2D square, while regional context includes the top/left half or the central square with half the size.}
  \begin{tabular}{lcccc|ccc}
    \toprule
      & \multicolumn{4}{c|}{\emph{\textbf{Uniform Context}}} &  \multicolumn{3}{c}{\emph{\textbf{Regional Context}}}\\
    $|\mathcal{C}|$  & $N=20$ & $40$ & $80$ & $160$ & Top & Left & Center \\
    \midrule
    NPs & $-0.35 {\scriptstyle \pm 0.30}$ & $-0.31 {\scriptstyle \pm 0.29}$ & $-0.28 {\scriptstyle \pm 0.27}$ & $-0.25 {\scriptstyle \pm 0.27}$ & $-39.63 {\scriptstyle \pm 304.08}$ & $-22.03 {\scriptstyle \pm 107.25}$ & $-1.18 {\scriptstyle \pm 4.20}$ \\
    MNPs & $\bm{0.67} {\scriptstyle \pm 0.31}$ & $\bm{0.80} {\scriptstyle \pm 0.23}$ & $\bm{0.98} {\scriptstyle \pm 0.18}$ & $\bm{1.12} {\scriptstyle \pm 0.15}$ & $\bm{0.44} {\scriptstyle \pm 0.59}$ & $\bm{0.22} {\scriptstyle \pm 0.61}$ & $\bm{0.65} {\scriptstyle \pm 0.39}$ \\
    \bottomrule
  \end{tabular}
  \label{tb:geology}
\end{table*}

More recently, deep learning approaches have been applied to this problem. \cite{dupont2018generating, zhang2019generating, chan2019parametric} train GANs on large geological datasets and use this as a prior for inferring geological structure given sparse measurements. However, the resulting models do not provide any meaningful uncertainty estimates, which are crucial for decision making in several applications. As \Glspl{MNPs} provide uncertainty estimates they may therefore be a compelling approach for this problem.

To evaluate \Glspl{MNPs} on this problem, we introduce the GeoFluvial dataset, containing more than 25k simulations of fluvial patterns as 128$\times$128 grayscale images. The dataset was generated using the open source \texttt{meanderpy} \citep{sylvester2019high} package, which itself simulates the geological patterns as \glspl{SPs} (see \Cref{appsubsec: data} for details). We train our model on a training set of 20k simulations and evaluate it on a test set of 5k simulations. We test our trained model on a variety of context point configurations, corresponding to geological measurements taken at various spatial locations. 
Specifically, we consider uniformly sampled measurements (at 20, 40, 80, and 160 locations) as well as scenarios where measurements are spatially restricted to a certain area (top, left, and in the center of the square). 
Quantitative results are shown in \Cref{tb:geology} and qualitative results in \Cref{fig:geology-results}. As can be seen, our model achieves better likelihoods than NPs on this problem for all context point configurations, while also generating patterns that match the geological context. Further, as can be seen in \Cref{fig:geology-results}, the uncertainty estimates of the model are consistent with expectations, i.e. the model is more uncertain far from measurement locations or when there is ambiguity in the direction of the fluvial pattern.

\section{Discussion}

\looseness=-1
\textbf{Limitations and future work}~~ Because we apply only a finite number of Markov transitions,
the entire computational graph containing intermediate states is held in memory for backpropagation. To alleviate this, an interesting direction for future work would be to consider continuous-time Markov transitions, i.e. stochastic differential equations in function space, which require less memory by using adjoint methods \citep{chen2018neural,sdes2020}.
While \Cref{eq:delta_transition_operator} provides a valid general construction of exchangeable and consistent \glspl{mMTO} with latent variables, it may be feasible to investigate other forms of \glspl{mMTO} that do not necessitate latent variables, thereby eliminating the need for variational approaches.

There are often inherent symmetries in data (e.g.~translational symmetries for stationary \glspl{SPs}) and it may be possible to improve empirical performance of \glspl{MNPs} using (group equivariant) convolutions that preserve certain symmetries \citep{gordon2019convolutional,kawano2020group,foong2020meta}. These improvements are, however, orthogonal to our contributions and combining them is an exciting direction for future work.

\textbf{Conclusions}~~ We have introduced Markov Neural Processes (\glspl{MNPs}), a framework for constructing flexible neural generative \gls{SPs} models.
\glspl{MNPs} use neural-parameterized Markov transition operators on function spaces to gradually transform a simple initial \gls{SPs} into a flexible one. We proved that our proposed neural transitions preserve exchangeability and consistency necessary to define valid \glspl{SPs}. Empirical studies demonstrate that we can obtain  expressive models of \glspl{SPs} with promising performance on contextual bandits and geological data.

\begin{ack}

We would like to thank Hyunjik Kim for valuable discussions. We also thank Peter Tilke for useful discussions around generating the geological dataset. JX gratefully acknowledges funding from Tencent AI Labs through the Oxford-Tencent Collaboration on Large Scale Machine Learning. 
\end{ack}

\medskip

\bibliography{neurips_2023}
\bibliographystyle{plainnat}


\newpage
\appendix

\appendixtitle{Deep Stochastic Processes via  Functional Markov Transition Operator}

\vspace{10mm}

\glsresetall

\section{Proofs} \label{appsec: proofs}

\subsection[]{Proof of proposition~\ref{prop:np_general} (See page~\pageref{prop:np_general})}
\npgeneral*
\begin{proof}

Given an input $x_i$ and a latent variable $z$, $y_i$ is obtained by applying an invertible transformation $F_{\theta}$ to $y^{(0)}_i$. Therefore, we can represent $p(y_i|y^{(0)}_i;x_i,z)$ using the Dirac delta:
\begin{align}
    p(y_i|y^{(0)}_i;x_i,z) = \delta(y_i - F_{\theta}(y_i^{(0)};x_i,z))
\end{align}
Furthermore, according to \Cref{eq:general_form}, these transformations are independently applied given the latent variable $z$. So we have:
\begin{align}
    p(y_{1:n}|y^{(0)}_{1:n};x_{1:n},z) = \prod_{i=1}^{n} \delta(y_i - F_{\theta}(y_i^{(0)};x_i,z))
\end{align}
Hence
\begin{align}
     p_{x_{1:n}}(y_{1:n} | y^{(0)}_{1:n}) &= \int p_{\theta}(z) p_{\theta}(y_{1:n} | y^{(0)}_{1:n}; x_{1:n}, z) \d z \\ 
     &= \int p_{\theta}(z) \prod_{i=1}^{n} \delta(y_i - F_{\theta}(y_i^{(0)};x_i,z)) \d z 
\end{align}
\end{proof}

As per \Cref{prop:markov_transition_construction}, $\{p_{x_{1:n}}(y_{1:n} | y^{(0)}_{1:n})\}_{x_{1:n}}$ is exchangeable and consistent. Furthermore, because $\{p_{x_{1:n}}(y^{(0)}_{1:n})\}_{x_{1:n}}$ is a valid \gls{SPs}, $\{p_{x_{1:n}}(y_{1:n})\}_{x_{1:n}}$ is also a valid \gls{SPs}

\subsection[]{Proof of proposition~\ref{prop:markov_transition} (See page~\pageref{prop:markov_transition})}
\markovtransition*
\begin{proof}

According to the definitions of consistency and exchangeability for both the collection of marginals $\{p_{x_{1:n}}(y^{(0)}_{1:n})\}_{x_{1:n}}$ and the collection of transition operators $\{p_{x_{1:n}}(y_{1:n} | y^{(0)}_{1:n})\}_{x_{1:n}}$, we have the following (\Cref{eq:pdf_consistency,eq:pdf_exchangeability,eq:kernel_consistency,eq:kernel_exchangeability}):
\begin{align*}
    \int p_{x_{1:n}}(y^{(0)}_{1:n}) \d x_{m+1:n} &= p_{x_{1:m}}(y^{(0)}_{1:m}) \\ 
    p_{\pi(x_{1:n})}(\pi(y^{(0)}_{1:n}))  &= p_{x_{1:n}}(y^{(0)}_{1:n}) \\ 
    \int p_{x_{1:n}}(y_{1:n} | y^{(0)}_{1:n}) \,\d y_{m+1:n} &= p_{x_{1:m}}(y_{1:m} | y^{(0)}_{1:m}) \\ 
    p_{x_{1:n}}(y_{1:n} | y^{(0)}_{1:n}) &= p_{\pi(x_{1:n})}(\pi(y_{1:n}) | \pi(y^{(0)}_{1:n}))
\end{align*}
where $1<m<n$.

The Markov transition on the function outputs are described by:
\begin{align*}
     p_{x_{1:n}}(y_{1:n}) = \int p_{x_{1:n}}(y^{(0)}_{1:n}) p_{x_{1:n}}(y_{1:n} | y^{(0)}_{1:n}) d y^{(0)}_{1:n}
\end{align*}

For any finite sequence $x_{1:n}$, we have
\begin{align} \label{eq:proof_consistency_after}
    \int p_{x_{1:n}}(y_{1:n}) \d y_{m+1:n} &= \int \Big(\int p_{x_{1:n}}(y^{(0)}_{1:n}) p_{x_{1:n}}(y_{1:n} | y^{(0)}_{1:n}) \d y^{(0)}_{1:n}\Big) \d y_{m+1:n} \nonumber \\
    &= \int p_{x_{1:n}}(y^{(0)}_{1:n}) \Big(\int p_{x_{1:n}}(y_{1:n} | y^{(0)}_{1:n}) \d y_{m+1:n}\Big) \d y^{(0)}_{1:n} \nonumber \\
    &= \int p_{x_{1:n}}(y^{(0)}_{1:n}) p_{x_{1:m}}(y_{1:m} | y^{(0)}_{1:m}) \d y^{(0)}_{1:n}  \nonumber \\
    & = \int \Big( \int p_{x_{1:n}}(y^{(0)}_{1:n}) \d y^{(0)}_{m+1:n}\Big) p_{x_{1:m}}(y_{1:m} | y^{(0)}_{1:m}) \d y^{(0)}_{1:m} \nonumber \\
    & = \int p_{x_{1:m}}(y^{(0)}_{1:m}) p_{x_{1:m}}(y_{1:m} | y^{(0)}_{1:m}) \d y^{(0)}_{1:m}\nonumber  \\
    & = \int p_{x_{1:m}}(y_{1:m}) \d y^{(0)}_{1:m}  
\end{align}

Furthermore, we have 
\begin{align} \label{eq:proof_exchangeability_after}
    p_{\pi(x_{1:n})}(\pi(y_{1:n})) &= \int p_{\pi(x_{1:n})}(\pi(y^{(0)}_{1:n})) p_{\pi(x_{1:n})}(\pi(y_{1:n}) | \pi(y^{(0)}_{1:n})) \d \pi(y^{(0)}_{1:n}) \nonumber \\
    &= \int p_{x_{1:n}}(y^{(0)}_{1:n}) p_{x_{1:n}}(y_{1:n} | y^{(0)}_{1:n}) \d y^{(0)}_{1:n} \nonumber \\
    &= p_{x_{1:n}}(y_{1:n})
\end{align}

Therefore, the collection of marginals $\{p_{x_{1:n}}(y_{1:n})\}_{x_{1:n}}$ are both consistent and exchangeable (\Cref{eq:proof_consistency_after,eq:proof_exchangeability_after}), hence defining a valid \gls{SPs}.

\end{proof}

\subsection[]{Proof of proposition~\ref{prop:markov_transition_construction} (See page~\pageref{prop:markov_transition_construction})}
\markovtransitionconstruction*
\begin{proof}

Recall that \glspl{mMTO} in \Cref{eq:delta_transition_operator} write as:
\begin{align} \label{eq:proof_1}
    p_{x_{1:n}}(y_{1:n} | y^{(0)}_{1:n}; \theta) = \int p_{\theta}(z) \prod_{i=1}^{n} \delta(y_i - F_{\theta}(y^{(0)}_i ; x_i, z)) \,\d z,
\end{align}
It is a special case of a more general form:
\begin{align} \label{eq:proof_2}
    p_{x_{1:n}}(y_{1:n} | y^{(0)}_{1:n};\theta) = \int p_{\theta}(z) \prod_{i=1}^n p_{\theta}(y_i \,|\, y^{(0)}_i, x_i, z)  \d z,
\end{align}
\Cref{eq:proof_2} becomes \Cref{eq:proof_1} when $p_{\theta}(y_i \,|\, y^{(0)}_i, x_i, z)=\delta(y_i - F_{\theta}(y^{(0)}_i ; x_i, z))$ is a $\delta$-distribution.
Below we prove that \glspl{mMTO} are consistent and exchangeable for the general form. 
We have 
\begin{align} \label{eq:proof_mmto_consistency}
\int p_{x_{1:n}}(y_{1:n} | y^{(0)}_{1:n};\theta) \d y_{m+1:n} &= \int \Big( \int p_{\theta}(z) \prod_{i=1}^n p_{\theta}(y_i \,|\, y^{(0)}_i, x_i, z)  \,\d z\Big) \,\d y_{m+1:n} \nonumber \\ 
&= \int p_{\theta}(z) \prod_{i=1}^m p_{\theta}(y_i \,|\, y^{(0)}_i, x_i, z) \Big(\int \prod_{i=m+1}^n p_{\theta}(y_i \,|\, y^{(0)}_i, x_i, z) \d y_{m+1:n}\Big)  \,\d z  \nonumber \\
&= \int p_{\theta}(z) \prod_{i=1}^m p_{\theta}(y_i \,|\, y^{(0)}_i, x_i, z) \,\d z \nonumber \\ 
&= p_{x_{1:m}}(y_{1:m} | y^{(0)}_{1:m};\theta)
\end{align}
and 
\begin{align} \label{eq:proof_mmto_exchangeability}
p_{\pi(x_{1:n})}(\pi(y_{1:n})| \pi(y^{(0)}_{1:n});\theta) &= \int p_{\theta}(z) \prod_{i=\pi(1)}^{\pi(n)} p_{\theta}(y_i \,|\, y^{(0)}_i, x_i, z)  \d z \nonumber \\ 
&= \int p_{\theta}(z) \prod_{i=1}^n p_{\theta}(y_i \,|\, y^{(0)}_i, x_i, z)  \d z \nonumber \\ 
&= p_{x_{1:n}}(y_{1:n} | y^{(0)}_{1:n};\theta)
\end{align}
Therefore, the collection $\{p_{\pi(x_{1:n})}(\pi(y_{1:n})| \pi(y^{(0)}_{1:n});\theta)\}_{x_{1:n}}$ is both consistent (\Cref{eq:proof_mmto_consistency}) and exchangeable (\Cref{eq:proof_mmto_exchangeability}).
\end{proof}

\subsection{Derivation of Markov Neural Process marginal densities}

$T$ step \glspl{MNPs} have marginal densities in the following form (as in \Cref{eq: delta_mnp_density}):
\begin{align} 
     p_{x_{1:n}}(y_{1:n}; \theta) &= \int p_{\theta}(z^{(1:T)}) p_{x_{1:n}}(y^{(0)}_{1:n}) \prod_{t=1}^{T} \prod_{i=1}^n p^{(t)}_{\theta}(y^{(t)}_i \,|\, y^{(t-1)}_i, x_i, z^{(t)}) \d y^{(0:T-1)}_{1:n} \d z^{(1:T)} \nonumber \\ 
     &= \int p_{\theta}(z^{(1:T)}) p_{x_{1:n}}(y^{(0)}_{1:n}) \prod_{t=1}^{T} \prod_{i=1}^n \Big| \textnormal{det} \frac{\partial F^{(t)}_{\theta}(y^{(t-1)}_i; x_i, z^{(t)})}{\partial y^{(t-1)}_i} \Big|  \d z^{(1:T)}
\nonumber 
\end{align}

\begin{proof}

Given the sequence of latent variables $z^{(1:n)}$, our model becomes normalising flows on the finite sequence of function outputs $y_{1:n}$, with a prior distribution $p_{x_{1:n}}({y^{(0)}_{1:n}})$, and the invertible transformation at each step $F_{\theta}^{(t)}(\cdot;x_i,z^{(t)})$. According to \citet{rezende2015variational,papamakarios2019normalizing}, we have
\begin{align*}
     p_{x_{1:n}}(y_{1:n} | z^{(1:T)}; \theta)
     &= p_{x_{1:n}}(y^{(T)}_{1:n} | z^{(1:T)}; \theta)  \\
     & = p_{x_{1:n}}(y^{(T-1)}_{1:n} | z^{(1:T-1)}; \theta) \prod_{i=1}^{n} \Big| \textnormal{det} \frac{\partial F^{(T)}_{\theta}(y^{(T-1)}_i; x_i, z^{(T)})}{\partial y^{(T-1)}_i} \Big|  \\ 
     &= \cdots  \\ 
    &= p_{x_{1:n}}(y^{(0)}_{1:n}) 
    \prod_{t=1}^{T}\prod_{i=1}^{n} \Big| \textnormal{det} \frac{\partial F^{(t)}_{\theta}(y^{(t-1)}_i; x_i, z^{(t)})}{\partial y^{(t-1)}_i} \Big| 
\end{align*}
The marginal density $p_{x_{1:n}}(y_{1:n}; \theta)$ can be computed as:
\begin{align}
    p_{x_{1:n}}(y_{1:n}; \theta) 
    &= \int p_{\theta}(z^{(1:T)}) p_{x_{1:n}}(y_{1:n} | z^{(1:T)}; \theta) \d z^{(1:T)} \nonumber \\
    &= \int p_{\theta}(z^{(1:T)}; \theta) p_{x_{1:n}}(y^{(0)}_{1:n}) \prod_{t=1}^{T}\prod_{i=1}^{n} \Big| \textnormal{det} \frac{\partial F^{(t)}_{\theta}(y^{(t-1)}_i; x_i, z^{(t)})}{\partial y^{(t-1)}_i} \Big| \d z^{(1:T)} 
\end{align}

\end{proof}

\section{Implementation details} \label{appsec: implementations}

\subsection{Data} \label{appsubsec: data}

\subsubsection{1D synthetic data}

\textbf{\gls{GPs} samples}~~ Three 1D function datasets were created, each comprising samples from Gaussian processes (GPs) with different kernel functions: RBF (length scale $0.25$), Matern-$2.5$ (length scale $0.5$), and Exp-Sine-Squared (length scale $0.5$ and periodicity $0.5$). Observation noise variances were set at $0.0001$ for RBF and Matern kernels, and $0.001$ for the Exp-Sine-Squared kernel. Function inputs spanned from $-2.0$ to $2.0$. To accelerate sampling, identical input locations were employed for every 20 function instances. For this dataset, context size varies randomly from $2$ to $50$.

\textbf{Monotonic functions}~~ Our generation of monotonic functions starts by sampling $N \sim \text{Poisson}(5.0)$ to determine the number of interpolation nodes. We then sample $N+1$ increments $X_{\text{increments}}$ sampled from a Dirichlet distribution. These increments are increased by $0.01$ to avoid excessively small values, and are then normalized such that their sum is $4.0$. 
The final $X$ values for interpolation nodes are obtained by adding $-2.0$ to the cumulative sum of these increments so that these $X$ values are within the range $[-2.0,2.0]$.
For each $X$ value, a corresponding $Y$ value is sampled from a Gamma distribution $Y \sim \text{Gamma}(2, 1)$. The cumulative sum of $Y$ values ensures monotonicity. A PCHIP interpolator \citep{Fritsch1984AMF} is then created using these interpolation nodes ($X$ and $Y$ values) to generate function outputs.
Given the functions, we randomly sample $128$ $X$ values and compute their corresponding function values. Note that these $X$ values are now used to evaluate the functions, rather than serving as interpolation nodes. The function values are normalized to the range $[-1.0, 1.0]$. Finally, Gaussian observation noise with a standard deviation of $0.01$ is added to these function values.
For this dataset, context size varies randomly from $2$ to $20$.

\textbf{Convex functions}~~ To create a dataset of convex functions, we compute integrals of the monotonic functions previously created. These convex functions are then randomly shifted and rescaled to increase diversity. The function values are normalized to the range $[-1.0, 1.0]$. Finally, Gaussian observation noise with a standard deviation of $0.01$ is added to these function values. Context sizes varied randomly from $2$ to $20$.

\textbf{Stochastic differential equations samples}~~ We create a dataset of 1D functions, each of which represents a solution to a Stochastic Differential Equation (SDE). This SDE is defined by the drift function $f(x, t) = -(a + x \cdot b^2) \cdot (1 - x^2)$ and the diffusion function $g(x, t) = b \cdot (1 - x^2)$, with constants $a$ and $b$ both set to $0.1$.
The function sets up a time span that includes $128$ uniformly distributed points within the range of $[-5.0, 5.0]$. We then uniformly sample an initial condition, $x_0$, between $0.2$ and $0.6$.
We use the \texttt{sdeint.stratKP2iS} function from the \texttt{sdeint} library to generate a solution to the SDE. This solution forms a 1D function that depicts a trajectory of the SDE across the defined time span, originating from the initial condition $x_0$.
Lastly, we randomly alter the context sizes between $2$ and $50$.

For all the aforementioned datasets, we use the following set sizes: $50000$ for the training set, $5000$ for the validation set, and $5000$ for the test set. 

\subsubsection{Geological data}

We generate the GeoFluvial dataset using the \texttt{meanderpy} \citep{sylvester2019high} package. We first run simulations using the numerical model of meandering in \texttt{meanderpy} with default parameters except for channel depth which we change from 16m to 6m. The resulting simulations correspond to images of shape (800, 4000). We then extract 3 random non-overlapping crops of shape (700, 700) from these images, which are resized to (128, 128) and are used as data. We ran $\left \lceil{25000 / 3}\right \rceil$ simulations, resulting in a dataset of 25,000 images.

\subsection{Model architectures and hyperparameters} \label{appsubsec: hyperparameters}

\looseness=-1
\textbf{Permutation equivariant/invariant neural networks on sets}~~ We implement two versions of neural modules which operate on sets, both of which preserve the permutation symmetry of the sets. They are known as deep sets \citep{zaheer2017deep} and set transformers \citep{lee2019set}. To obtain an invariant representation, we used sum pooling for deep sets, and pooling by multi-head attention (PMA) \citep{lee2019set} for set transformers. Our experiments primarily employed set transformers, chosen for their stronger ability to model interactions between datapoints. However, for the wheel contextual bandit experiments, the context set often expanded to tens of thousands. To circumvent memory issues, we use deep sets in this instance.
For set transformers, we stack two layers of set attention blocks (SABs) with a hidden dimension of $64$ and $4$ heads. This is followed by a single layer of PMA, which was subsequently followed by a linear map.
In the case of using deep sets, we use a shared instance-wise Multi-Layer Perceptron (MLP) that has two hidden layers and a hidden dimension of $64$. This MLP processes the concatenation of function inputs and outputs. Following this, we add a sum aggregation which is then succeeded by a single-layer MLP with ReLU (Rectified Linear Unit) activation.

\looseness=-1
\textbf{Conditional normalising flows}~~ The instance-wise invertible transformation $F_{\theta}^{(t)}$ at each time step $t$ is parameterised as a rational quadratic spline flow \citep{durkan2019neural}. Note that we do not share parameters among iterations. To condition on an input $x_i$ and a latent variable $z$, we use a Multi-Layer Perceptron (MLP) which takes in $x_i$, $z$ and produces the parameters for configuring a one-layer spline flow with $10$ bins.

\looseness=-1
\textbf{Multi-Layer Perceptrons (MLPs)}~~ Except for the deep sets, we use MLPs to parameterise continuous functions in two places. Firstly, we use it as a conditioning component in conditional normalising flows $F_{\theta}^{(t)}$ (as we mentioned above). It has two hidden layers 
and a hidden dimension of $128$. Secondly, we use it to parameterise $\mu^{(t)}_{\phi}$ and $\sigma^{(t)}_{\phi}$ in the inference model (see \Cref{subsec: parameterization inference and training}). It has two hidden layers 
and a hidden dimension of $64$.

We adopt the same pretraining approach as \citet{Garnelo2018NeuralP} for the contextual bandits problem, pretraining the model on $64$ wheel problems $\{\delta_i\}_{i=1}^{64}$, where $\delta_i \sim \mathcal{U}(0,1)$. Each wheel $\delta_i$ has a context size ranging from $10$ to $512$ and a target size varying between $1$ and $50$. Here each data point is a tuple $(X, a, r)$. Because the context size can grow to tens of thousands during test time, for computational efficiency, we use deep sets to implement $\operatorname{\textsc{SetEncoder}}$ and a linear flow rather than a spline flow to parameterise $F_{\theta}^{(t)}$.

For optimisation, we use Adam \citep{Kingma2014AdamAM} optimiser with a learning rate of $0.0001$. We use a batch size of $100$ for 1D synthetic data and a batch size of $20$ for the geological data. In experiments, we find that it is often beneficial to encode $x_i$ with Fourier features \citep{tancik2020fourier}, and we use $80$ frequencies randomly sampled from a standard normal.

\subsection{Computational costs and resources}

\looseness=-1
In our current implementation,the invertible transformations $F_{\theta}^{(t)}$ in the generative model and the mean/variance function $\mu_{\phi}^{(t)}$$\sigma_{\phi}^{(t)}$ in the inference model do not share parameters across iterations. However, we do share the $\operatorname{\textsc{SetEncoder}}$ across iterations. Consequently, with our \gls{MNPs}, both memory usage and computing time increase linearly with the number of steps. 
If the $\operatorname{\textsc{SetEncoder}}$ employs set transformers, the computational cost becomes $O(m^2)$, where $m$ stands for the context size. This cost, however, can be decreased to $O(mk)$ by replacing set attention blocks (SABs) with induced set attention blocks (ISABs), where $k$ denotes the number of inducing points. If deep sets are used to implement $\operatorname{\textsc{SetEncoder}}$, the computational cost reduces to $O(m)$, despite the inference model being less expressive.

Training \gls{MNPs} is indeed resource-intensive; however, inference in \gls{MNPs} simply requires a forward pass. On a single GeForce GTX 1080 GPU card, a standard $7$-step MNP takes approximately one day to train for $200k$ steps on 1D functions. If we use $20$ latent samples to evaluate the marginal log-likelihood using the IWAE objective \cite{burda2016importance}, inference runs typically in a few seconds for a batch of $100$ functions.

\section{Broader impacts}

\vspace{-1mm}

\looseness=-1
Our work presents \glspl{MNPs}, a novel approach to construct \gls{SPs} using neural parameterised Markov transition operators. The broader impacts of this work have many aspects. Firstly, the proposed models are more flexible and expressive than traditional \gls{SPs} models and \glspl{NPs}, enabling them to handle more complex patterns that arise in many applications.
Moreover, the exchangeability and consistency in \glspl{MNPs} could improve the robustness and reliability of neural \gls{SPs} models. This could lead to more trustworthy systems, which is a critical aspect in high-stakes applications.
However, as with any machine learning system, there are potential risks. For example, the complexity of these models could exacerbate issues related to interpretability and transparency, making it more difficult for humans to understand and control their behaviour.

\end{document}